\documentclass[letterpaper]{article} 
\usepackage[preprint]{aaai2027}  
\usepackage[hyphens]{url}  
\usepackage{graphicx} 
\usepackage{natbib}  
\usepackage{caption} 
\usepackage{amsmath,amsfonts,bm}

\def\eqref#1{equation~\ref{#1}}

\def\1{\bm{1}}

\DeclareMathAlphabet{\mathsfit}{\encodingdefault}{\sfdefault}{m}{sl}
\SetMathAlphabet{\mathsfit}{bold}{\encodingdefault}{\sfdefault}{bx}{n}

\newcommand{\E}{\mathbb{E}}

\newcommand{\R}{\mathbb{R}}

\usepackage{amsmath,amssymb,amsthm}
\usepackage{booktabs}
\graphicspath{{figures/}}

\newcommand{\epsdp}{\varepsilon}
\newcommand{\deltadp}{\delta}
\newcommand{\Bo}{B_0}
\newcommand{\lo}{\ell_0}
\newcommand{\N}{\mathcal{N}}
\newcommand{\D}{\mathcal{D}}
\newcommand{\Dpub}{\mathcal{D}_{\mathrm{pub}}}
\newcommand{\clip}{\mathrm{clip}}

\newcommand{\smax}{\sigma_{\max}}

\theoremstyle{plain}
\newtheorem{theorem}{Theorem}[section]
\newtheorem{proposition}[theorem]{Proposition}
\newtheorem{lemma}[theorem]{Lemma}
\newtheorem{corollary}[theorem]{Corollary}
\theoremstyle{definition}

\newtheorem{remark}[theorem]{Remark}
\newtheorem{fact}[theorem]{Fact}

\title{StraightDP: Geometry-Aware Differential Privacy\\for Rectified-Flow Transformers}
\author {
    Xujun Che\textsuperscript{\rm 1},
    Depeng Xu\textsuperscript{\rm 1}\corresponding,
    Xintao Wu\textsuperscript{\rm 2}
}
\affiliations {
    \textsuperscript{\rm 1}Department of Cybersecurity, University of North Carolina at Charlotte, Charlotte, NC, USA\\
    \textsuperscript{\rm 2}Department of Electrical Engineering and Computer Science, University of Arkansas, Fayetteville, AR, USA\\
    xche@charlotte.edu, dxu7@charlotte.edu, xintaowu@uark.edu
}

\begin{document}

\maketitle

\begin{abstract}
Differentially private (DP) training of text-conditioned generative models suffers a utility cliff at strong privacy.
We revisit this problem through the \emph{geometry} of rectified flows: along the straight interpolation between noise and data, the Bayes-optimal velocity is governed to leading order at the noise end by a few class-conditional \emph{moments}, and increasingly sample-specific structure matters toward the data end.
StraightDP exploits this heterogeneity end to end.
A small budget share releases whitened class-conditional moments once, to be distilled into the weights or injected at sampling time. The rest is spent by pre-declared DP-SGD toward the data end, beyond the moments' reach.
At $\epsdp{=}1$ on MNIST, the released moments alone already attain $0.76$ downstream accuracy with prototype-like samples and an FID of $237$, and uniform DP-SGD attains $0.21$.
The pipeline built on the release reaches $0.81$ accuracy at FID $56$ in a public latent space.
Constraining per-token \emph{stream norms} of the multimodal backbone leaves the pretraining loss unchanged yet improves downstream accuracy in the extreme-noise pixel-space regime, and its accuracy effect becomes monotonically more favorable as privacy strengthens.
The released moments also port to frozen SD3-medium, where sampling-time injection beats DP-LoRA training at a fraction of the budget.
\end{abstract}

\section{Introduction}\label{sec:intro}

Text-conditioned generative models are increasingly trained on sensitive paired data: medical images with reports, personal photographs with descriptions.
Such models can memorize, and attacks recover individual training records from weights or samples~\citep{shokri2017_mia,carlini2019_secretsharer}.
Differential privacy (DP)~\citep{dwork2014_book} is the standard remedy: it bounds, in the worst case, how much any single record can influence the released model, and because the guarantee survives arbitrary post-processing, everything a DP model generates can be shared without further privacy loss.
The guarantee comes at the cost of noise, and training generative models under DP is dominated by a single tension: the noise that provides it also erases the signal.
The usual recipe pretrains on public data, by now standard practice for DP learning~\citep{de2022_unlocking,yu2022_dpfinetune,ganesh2023_whypublic}, and fine-tunes with DP-SGD~\citep{abadi2016_dpsgd}, which clips each example's gradient and noises the aggregate. The per-step mechanisms compose under a numerical privacy-loss-distribution (PLD) accountant~\citep{mironov2019_sgm,koskela2020_fft,gopi2021_numerical}.
Across DP generative families (GANs~\citep{jordon2019_pategan,chen2020_gswgan}, one-shot kernel-embedding releases~\citep{harder2021_dpmerf,cao2021_dpsinkhorn,vinaroz2022_dphp,yang2023_dpntk}, and DP diffusion~\citep{dockhorn2022_dpdm,ghalebikesabi2023_dpdiffusion,lyu2023_dpldm}), the private mechanism is largely model-agnostic: each training step is clipped and noised the same way regardless of \emph{what} is being learned at that step.
Adaptive-clipping and dynamic-allocation variants~\citep{andrew2021_adaptiveclipping,bu2022_autoclip,du2021_dynamicdp} adjust the mechanism, but they adapt \emph{during} training on private signal. These pipelines do not explicitly exploit structure knowable \emph{before} any private data is touched (Appendix~\ref{app:related} surveys related work).

Rectified flows~\citep{liu2022_rectifiedflow,lipman2022_flowmatching}, the generative family behind current text-to-image systems~\citep{esser2024_sd3}, learn a velocity field that transports Gaussian noise to data along straight interpolation paths. Sampling integrates the learned field from noise to image.
This family makes the uniformity above appear wasteful.
At each flow time the regression target is a conditional expectation of the noise-to-data displacement given the current point on the straight path.
To leading order near the noise end this field depends only on a few \emph{low-order moments} of the data distribution; toward the data end it depends on increasingly sample-specific structure.
The information the model must extract from private data is therefore \emph{time-heterogeneous}, and so, we argue, should be the privacy budget, the supervision signal, and even the architecture.
StraightDP turns that argument into design decisions on three surfaces that standard DP fine-tuning leaves untouched: \emph{what} is released, \emph{when} the budget is spent, and \emph{on what architecture} (Figure~\ref{fig:overview}).

\textbf{(i) Moments first, gradients second.}
A small budget share releases whitened class-conditional moments once. They define the analytic noise-end field in closed form and route each class's generation path \emph{before} any per-example gradient is consumed.
The rest is spent by DP-SGD under a pre-declared, time-bucketed allocation verified by exact PLD accounting. This planning layer is risk-free but contributes within seed noise: the release carries the utility.

\textbf{(ii) A backbone built for noise.}
Clamping per-token stream norms of a multimodal diffusion transformer (MM-DiT)~\citep{esser2024_sd3} \emph{during pretraining} leaves the noiseless pretraining loss unchanged, yet its private accuracy gain grows monotonically as the noise strengthens in pixel space (Table~\ref{tab:backbone}). The already well-conditioned latent variant does not need it.
Bounded activations empirically concentrate per-sample gradient norms (Appendix~\ref{app:gradstats}): an architectural control for the regime where DP noise is most damaging~\citep{bethune2023_dpsgdclipless}.

\textbf{(iii) One release, two injection ports.}
The same released moments serve two regimes: \emph{distilled} into the weights of a model trained end-to-end, or \emph{injected at sampling time} into a frozen prior whose weights never move.
Which port works is a property of the injected model, and the experiments answer it at both scales: on MNIST the routes are interchangeable, while on frozen SD3-medium only the sampling port survives, beating budget-matched DP-LoRA fine-tuning.

\begin{figure*}[t]
\centering
\includegraphics[width=\textwidth]{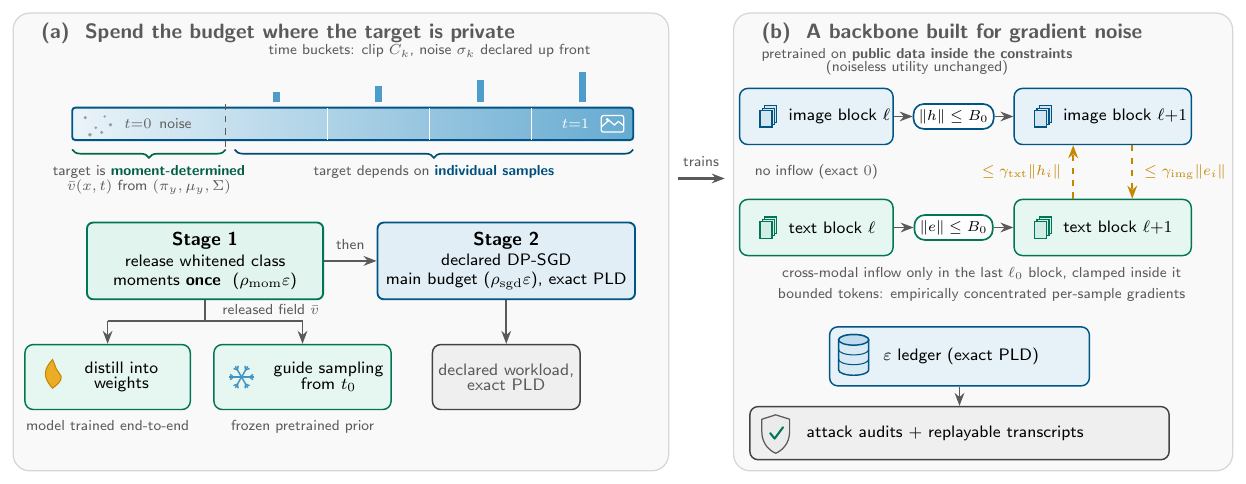}
\caption{StraightDP overview.
(a)~The noise-end velocity target is a closed-form functional of a few class-conditional moments: Stage~1 releases them \emph{once}, and the field enters either the weights (model trained end-to-end) or the sampler (frozen prior). Stage~2 spends the main budget with declared DP-SGD, optionally under a public time-heterogeneous plan.
(b)~The backbone is pretrained on public data inside stream-norm clamps, with cross-modal inflow an architectural zero outside a designated final block and clamped inside it.
Bounded tokens empirically concentrate per-sample gradients exactly where DP noise is strongest, and every run logs audits and a replayable accounting transcript.}
\label{fig:overview}
\end{figure*}

\section{Preliminaries}\label{sec:prelim}

\textbf{Setting.}
A private dataset $\D=\{r_i\}_{i=1}^N$ contains records $r=(z,c)$: an image (or frozen-autoencoder latent) $z\in\R^d$ with $\|z\|_2\le R$, and a caption $c$ encoded by a \emph{frozen} public text encoder; captions carry a class label $y\in[Y]$ (possibly further attributes).
An out-of-domain public dataset $\Dpub$ is available for pretraining and for every calibration.

\textbf{Differential privacy.}
Datasets are \emph{adjacent} if they differ by adding or removing one record, the native semantics of Poisson-subsampled DP-SGD accounting. A mechanism $M$ is $(\epsdp,\deltadp)$-DP if $\Pr[M(\D)\in A]\le e^{\epsdp}\Pr[M(\D')\in A]+\deltadp$ for all adjacent pairs and events.
Every mechanism is composed in these semantics by one numerical PLD accountant, and every run writes a replayable transcript from which the reported $\epsdp$ is recomputed.

\textbf{DP-SGD.}
DP-SGD~\citep{abadi2016_dpsgd} makes one training step private by clipping each example's gradient, $\clip_C(g)=g\cdot\min(1,C/\|g\|)$, and noising the sum over the Poisson-sampled batch: $\tilde g=\sum_{i\in B}\clip_C(g_i)+\N(0,\sigma^2C^2I)$.
The clip norm $C$ bounds one record's influence, the noise multiplier $\sigma$ and the sampling rate set the per-step privacy loss, and the accountant composes the steps.

\textbf{Rectified flow.}
With flow noise $\xi\sim\N(0,I_d)$ and interpolation $x_t=(1-t)\xi+tz$ ($t\in[0,1]$, $t{=}0$ the noise end), the model $v_\theta(x,t,c)$ regresses the target $v=z-\xi$:
\begin{equation}
\label{eq:fmloss}
\mathcal{L}(\theta)=\E_{r,\xi,t}\big\|v_\theta(x_t,t,c)-(z-\xi)\big\|_2^2 .
\end{equation}
Sampling integrates $\dot x = v_\theta$ from $t{=}0$ to $1$ with classifier-free guidance.
The Bayes-optimal field, conditionally on the caption, is $v^\star(x,t,c)=\E[z-\xi\mid x_t=x,\,c]$ (the classifier-free null branch regresses its unconditional counterpart); substituting $\xi=(x-tz)/(1-t)$ gives the identity we use repeatedly, conditionally or not,
\begin{equation}
\label{eq:vstar}
v^\star(x,t)=\frac{\E[z\mid x_t=x]-x}{1-t}.
\end{equation}

The adversary sees final parameters.
Every quantity the pipeline tunes comes from $\Dpub$ or released statistics, so the only private interactions are the declared mechanisms.

\section{The StraightDP Pipeline}\label{sec:tha}

This section takes the intro's three surfaces in turn: Stage~1 releases the moments (\emph{what}), Stage~2 spends the declared main budget (\emph{when}), and the backbone constrains the architecture both stages run on.
Figure~\ref{fig:overview}(a) traces the two stages.

\subsection{Moments first: the analytic low-$t$ field and its release}\label{sec:a1}

\begin{figure}[t]
\centering
\includegraphics[width=0.96\columnwidth]{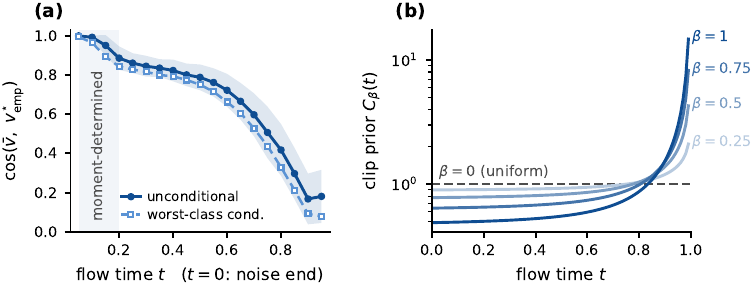}
\caption{The geometry that StraightDP exploits (computed on public data). (a)~Cosine between released and exact empirical fields vs.\ $t$: the unconditional (null-branch) pair \eqref{eq:mixture} and the \emph{worst class} of the conditional pair \eqref{eq:analytic}. Both are moment-dominated over the same low-$t$ horizon. (b)~The clip-prior family $C_\beta(t)$ ($\beta{=}0$ uniform, log scale).}
\label{fig:timegeo}
\end{figure}

\begin{proposition}[Analytic field of a moment approximation]\label{prop:analytic}
Approximate the class-conditional law of $z$ by $\N(\mu_y,\Sigma)$ with class priors $\pi_y$, and let $A_t=(1-t)^2 I + t^2\Sigma$.
Conditionally on the class,
\begin{equation}
\label{eq:analytic}
\E[z\mid x_t{=}x,\,y]=\mu_y+t\Sigma A_t^{-1}(x-t\mu_y),
\end{equation}
while the unconditional field of the classifier-free null branch averages \eqref{eq:analytic} with responsibilities $w_y(x,t)\propto \pi_y\,\N(x;\,t\mu_y,\,A_t)$:
\begin{equation}
\label{eq:mixture}
\E[z\mid x_t{=}x]=\sum_{y} w_y(x,t)\Big(\mu_y+t\Sigma A_t^{-1}(x-t\mu_y)\Big).
\end{equation}
By \eqref{eq:vstar}, as $t\to0$ the conditional field tends to $\mu_y-x$ and the unconditional one to $\sum_y\pi_y\mu_y-x$: each class's noise-end target is its released mean, and the data enter only through $(\pi_y,\mu_y,\Sigma)$.
\end{proposition}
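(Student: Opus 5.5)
The argument is Gaussian conditioning, followed by mixing over classes and a limit. Under the model, conditionally on $y$ we have $z\sim\N(\mu_y,\Sigma)$ and independent $\xi\sim\N(0,I_d)$. Hence $(z,x_t)$ with $x_t=(1-t)\xi+tz$ is jointly Gaussian. Its moments are
\[
\E[x_t\mid y]=t\mu_y,\qquad \mathrm{Cov}(x_t\mid y)=(1-t)^2I+t^2\Sigma=A_t,\qquad \mathrm{Cov}(z,x_t\mid y)=t\Sigma .
\]
The standard Gaussian conditioning formula $\E[z\mid x]=\E z+\mathrm{Cov}(z,x)\mathrm{Cov}(x)^{-1}(x-\E x)$ then yields \eqref{eq:analytic} directly. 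The only point to check is that $A_t$ is invertible. For $t\in[0,1)$ the term $(1-t)^2I$ makes $A_t$ positive definite because $\Sigma\succeq0$. At $t=1$ we need $\Sigma\succ0$, which the whitening assumption provides; in any case that endpoint is irrelevant to the limit claim.

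For the unconditional field, I would use the tower property over the class: $\E[z\mid x_t=x]=\sum_y \Pr(y\mid x_t=x)\,\E[z\mid x_t=x,y]$. By Bayes' rule, $\Pr(y\mid x_t=x)\propto \pi_y\,p(x_t=x\mid y)$, and conditionally on $y$ the density of $x_t$ is $\N(x;t\mu_y,A_t)$ from the computation above. These are exactly the responsibilities $w_y$, so \eqref{eq:mixture} follows. The classifier-free null branch regresses this unconditional expectation, as noted in the Preliminaries.

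For the limits I would substitute into \eqref{eq:vstar}. As $t\to0$ we have $A_t\to I$, so $t\Sigma A_t^{-1}(x-t\mu_y)\to0$, and the conditional expectation tends to $\mu_y$. Since $1-t\to1$, the conditional field $v^\star(x,t,y)\to\mu_y-x$. For the mixture, $\N(x;t\mu_y,A_t)\to\N(x;0,I)$, which is the same for every $y$, so $w_y\to\pi_y$. These are finitely many continuous terms, so the unconditional field tends to $\sum_y\pi_y\mu_y-x$.

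The final clause, that data enter only through $(\pi_y,\mu_y,\Sigma)$, is immediate: every expression above is a function of these parameters, $x$ and $t$ alone. Each step is routine, and the care needed is limited to two points. The first is stating the invertibility of $A_t$ and the range of $t$. The second is that the limit is pointwise in $x$, since the responsibilities converge for each fixed $x$ but not uniformly in $x$. I would state the result pointwise in $x$ and not attempt any uniform version.
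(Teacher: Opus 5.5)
Your proposal is correct and follows the paper's proof step for step: joint Gaussianity of $(z,x_t)$ with the same three moments and the same invertibility remark for $t\in[0,1)$, Gaussian conditioning for \eqref{eq:analytic}, Bayes' rule plus the tower property for \eqref{eq:mixture}, and the $t\to0$ limit through \eqref{eq:vstar}. The only differences are minor: the paper re-derives \eqref{eq:vstar} from $\xi=(x-tz)/(1-t)$, and it quantifies the limit as $O(t)$ uniformly over compact sets of $x$ (via $A_t^{-1}=I+O(t)$ and class-density ratios $1+O(t)$), which is stronger than your pointwise continuity argument but not needed for the statement and consistent with your caveat that convergence fails to be uniform over all of $\mathbb{R}^d$.
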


Proof in Appendix~\ref{app:proofs}. In outline, joint Gaussianity of $(z,x_t)$ gives \eqref{eq:analytic} by Gaussian conditioning, and Bayes' rule over the class label turns the class posteriors into the responsibilities of \eqref{eq:mixture}.
The leading order itself is distribution-free:

\begin{corollary}[Distribution-free leading order]\label{cor:smallt}
For \emph{any} class-conditional law supported in the $R$-ball, with mean $\mu_y$ and covariance $\Sigma_y$, uniformly over $\|x\|$ bounded,
\begin{gather*}
\E[z\mid x_t{=}x,\,y]=\mu_y+t\,\Sigma_y x+O(t^2),\\
v^\star(x,t,y)=(\mu_y-x)+t\,(\mu_y-x+\Sigma_y x)+O(t^2).
\end{gather*}
\end{corollary}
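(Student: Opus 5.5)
The plan is to write the conditional expectation as a ratio of exponentially tilted moments and expand in $t$. Conditionally on $y$, $x_t=(1-t)\xi+tz$ with $\xi\sim\N(0,I_d)$ independent of $z$. So, for $t<1$, the density of $x_t$ given $z$ is proportional to $\exp\bigl(-\|x-tz\|^2/(2(1-t)^2)\bigr)$. Bayes' rule then gives
\[
\E[z\mid x_t{=}x,\,y]=\frac{\E_y\!\left[z\,\exp\!\left(\tfrac{t\langle x,z\rangle}{(1-t)^2}-\tfrac{t^2\|z\|^2}{2(1-t)^2}\right)\right]}{\E_y\!\left[\exp\!\left(\tfrac{t\langle x,z\rangle}{(1-t)^2}-\tfrac{t^2\|z\|^2}{2(1-t)^2}\right)\right]},
\]
after cancelling the $z$-independent factor $\exp(-\|x\|^2/(2(1-t)^2))$. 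This is the exact distribution-free analogue of the Gaussian conditioning in Proposition~\ref{prop:analytic}.

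The key step is a uniform Taylor expansion of the exponent $\phi_t(z)=t\langle x,z\rangle/(1-t)^2-t^2\|z\|^2/(2(1-t)^2)$. For $\|z\|\le R$, $\|x\|\le K$ and $t\le 1/2$, we have $\phi_t(z)=t\langle x,z\rangle+O(t^2)$ and $|\phi_t|=O(t)$, with constants depending only on $R$ and $K$. Since $|\phi_t|$ is bounded, $e^{\phi_t}=1+t\langle x,z\rangle+O(t^2)$ uniformly in $z$ on the ball. Integrating against the class law gives:
\begin{itemize}
\item denominator: $1+t\langle x,\mu_y\rangle+O(t^2)$;
\item numerator: $\mu_y+t\,\E_y[z z^\top]x+O(t^2)$, using $\|z\|\le R$ to keep the remainder uniform.
\end{itemize}
Dividing, via $1/(1+a)=1-a+O(a^2)$, yields
\[
\mu_y+t\bigl(\E_y[zz^\top]-\mu_y\mu_y^\top\bigr)x+O(t^2)=\mu_y+t\,\Sigma_y x+O(t^2).
\]
This is the first claim. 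The boundedness of the support is exactly what makes every remainder uniform. This is the one place where care is needed: one must check that the $O(t^2)$ constants depend only on $(R,K)$ and not on the law. That check is the main obstacle, though it is routine once the exponent is bounded.

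For the velocity, substitute into \eqref{eq:vstar}. Using $1/(1-t)=1+t+O(t^2)$ and the fact that the numerator $\mu_y-x+t\Sigma_y x+O(t^2)$ is bounded on the region considered, we get
\[
v^\star=\bigl(\mu_y-x+t\Sigma_y x\bigr)(1+t)+O(t^2)=(\mu_y-x)+t\bigl(\mu_y-x+\Sigma_y x\bigr)+O(t^2).
\]

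As a consistency check, setting $\Sigma_y=\Sigma$ and expanding \eqref{eq:analytic} with $A_t^{-1}=I+O(t)$ reproduces the same first-order term. This confirms that the Gaussian approximation of Proposition~\ref{prop:analytic} is exact to first order.
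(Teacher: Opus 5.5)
Your proposal is correct and takes essentially the same route as the paper's proof (Lemma~\ref{lem:smallt}): write the posterior mean as a ratio of expectations under the tilt $\exp\bigl(t\langle x,z\rangle/(1-t)^2-t^2\|z\|^2/(2(1-t)^2)\bigr)$, expand it uniformly on the $R$-ball to $1+t\langle x,z\rangle+O(t^2)$, divide numerator by denominator to isolate $\Sigma_y x$, and then apply $(1-t)^{-1}=1+t+O(t^2)$ in \eqref{eq:vstar}. Your remark that the constants depend only on $(R,K)$, and your first-order consistency check against \eqref{eq:analytic}, also match the paper's treatment.
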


Proof in Appendix~\ref{app:proofs}. In outline, the posterior of $z$ given $x_t{=}x$ reweights the prior by $\exp(t\langle x,z\rangle+O(t^2))$ on the bounded support, and a first-order expansion of the ratio of expectations leaves exactly the covariance term.
Appendix~\ref{app:proofs} also closes the loop to sampling: a Gr\"onwall bound propagates any field discrepancy on $t\le\tau$ into a Wasserstein bound on the generated marginal, which is the quantity the public $\tau$ calibration controls, and a Gaussian-tail argument covers the sampler's unbounded inputs.
Moment dominance at the noise end is therefore a property of the interpolation, and the Gaussian model is first-order exact whenever class covariances match the shared $\Sigma$.
Beyond first order the proposition is a \emph{working model} whose reach is not assumed but measured on public data via the horizon $\tau$ below. Figure~\ref{fig:timegeo}a shows the two fields nearly collinear for $t\lesssim0.2$.
Privacy never depends on this adequacy: a poor moment model wastes budget, it does not leak.
Releasing the moments once is substantially cheaper than relearning the same information through per-step noisy gradients.

\textbf{Mechanism (class-conditional moment release).}
The released statistics are deliberately primitive: per-class counts $n_y$ and sums $S_y=\sum_{i:y_i=y}z_i$, plus one shared second moment.
(i) The pair $\{(n_y,S_y)\}_{y\in[Y]}$, all classes stacked, is released through one Gaussian mechanism.
Replacing one record moves two counts by at most $1$ and two sums by at most $R$, so whitening each coordinate by the type-sensitivity vector $(1,2R)$, its type's worst-case move, gives joint $\ell_2$ sensitivity $\Delta=2$ across every block a record can touch (Proposition~\ref{prop:a1sens}).
The class means $\mu_y=S_y/n_y$ are then free post-processing.
(ii) The shared second moment is a second, independent Gaussian mechanism, released in \emph{average} form: one record is a $1/N$ fraction of a mean, so its sensitivity decays as $1/N$, and at $N{=}6\times10^4$ the calibrated noise on the covariance is negligible (Proposition~\ref{prop:a1sens}).
(iii) All remaining geometry is public.
Class-mean offsets are released only inside a $p$-dimensional PCA basis $P\in\R^{p\times d}$ fit on $\Dpub$, which shrinks the noise norm by $\approx\sqrt{d/p}$ at the price of a feature clip $R_{\mathrm{feat}}$, and the shared covariance is kept at a low rank $r$.
These sensitivity bounds are proved for single-record \emph{replacement}. For these additive statistics, normalized by the declared public constant $N$ (held fixed across adjacent datasets), an add/remove change replaces one bounded contribution with zero while every other term keeps its normalization, so the bounds cover the adjacency of Section~\ref{sec:prelim} a fortiori (formalized on padded databases in Appendix~\ref{app:a1sens}).
Noisy counts are clamped at $1$, small classes fall back to the released global mean, the noised second moment is symmetrized and PSD-projected, and the rank-$r$ truncation happens after noising, so all of it is post-processing; both mechanisms enter the same PLD accountant.
Plugging the release into \eqref{eq:analytic} yields the private analytic field $\bar v(x,t,y)$: the noise-end target of every class.

\textbf{Beyond classes: the text-conditional release.}
The finite partition is not essential.
For general captions, Stage~1 instead releases the whitened \emph{cross moments}
\begin{equation}
\label{eq:crossmom}
M_{zc}=\frac1N\sum_i z_i\,e(c_i)^\top,
\qquad
M_{cc}=\frac1N\sum_i e(c_i)e(c_i)^\top,
\end{equation}
with $e(c)$ a caption featurization clipped to a public radius $R_{\mathrm{txt}}$, both blocks coordinate-whitened by the type radii and released through one joint Gaussian mechanism.
In this average form one replacement moves each whitened block by at most $2/N$, so the concatenated release has joint sensitivity $2\sqrt2/N$, the constant the accountant uses.
Ridge post-processing gives the linear conditional mean $\mu(c)=\mu_0+M_{zc}(M_{cc}+\lambda I)^{-1}e(c)$, and Proposition~\ref{prop:analytic} holds verbatim with $\mu_y$ replaced by $\mu(c)$: conditioning only moves the mean.
One-hot features recover the class release \emph{exactly}, so the class-conditional pipeline is the special case (Appendix~\ref{app:textmom}).
The noise geometry governs utility: caption features are centered at their public mean, whose shared component carries no conditional signal yet would dominate the whitened budget, with the removed part absorbed into the already-released $\mu_0$.
The image side reuses the class release's public PCA projection and clip, and templated captions parse into per-slot one-hot blocks.

\textbf{Conditional distillation.}
Stage~1 distills $v_\theta$ toward $\bar v$ on model-generated inputs for $t\le\tau$, conditionally (captions supply $y$, or the embedding $e(c)$ in the text-conditional case, with distillation captions sampled from public data).
The horizon $\tau$ is calibrated \emph{publicly}: we sweep the time-weighted cosine between $\bar v$ and an empirical field on $\Dpub$ and pick the knee, a public-to-private transfer heuristic whose failure costs utility, never privacy (its cross-domain stability is checked in Appendix~\ref{app:proofs}).
Under domain transfer this stage is decisive: without it the model emits public-domain glyphs.

\textbf{Sampling-time guidance.}
The same release can instead be spent at generation, with the weights untouched.
Under the moment model the flow's marginal at time $t_0$ is Gaussian, $x_{t_0}\sim\N\big(t_0\mu_y,\ t_0^2\Sigma+(1-t_0)^2 I\big)$ (with $\mu(c)$ in the text-conditional case), so sampling draws its initial state from this \emph{tilted marginal} and integrates the model from $t_0$ to $1$.
Distillation and guidance thus inject one release through two ports, the weights and the sampler. Section~\ref{sec:sd3} decides between them empirically at both scales.

\subsection{Gradients second: time-heterogeneous allocation (THA) for DP-SGD}\label{sec:thaplan}

The remaining budget goes to what the release cannot cover: the sample-specific target toward the data end.
Stage~2 spends the main budget with DP-SGD under a fully pre-declared workload: steps are partitioned into $K$ time buckets covering $[\tau,1]$, so no private gradient targets the distilled interval $[0,\tau)$, bucket $k$ training on $t\in[t_k,t_{k+1})$ with its own clip $C_k$ and noise multiplier $\sigma_k$, and the exact composition is verified before training.
A public geometry prior can shape $(C_k,\sigma_k)$ across buckets (shape family in Figure~\ref{fig:timegeo}b; planner in Appendix~\ref{app:neverworse}). The reader may take Stage~2 to be uniform DP-SGD with declared accounting.

\begin{remark}[Proxy dominance over uniform]\label{prop:neverworse}
The uniform plan is in the candidate family and all candidates are verified with the same exact accountant at the same budget. Hence the selected plan's accounted $\epsdp$ meets the budget and its proxy score is $\ge$ that of uniform.
This is a selection-consistency statement about the public proxy, not a utility guarantee. Its role is to make the geometry prior available at zero risk.
\end{remark}

The experiments find the plan's contribution within seed noise: the utility comes from the release.

\subsection{A backbone built for noise}\label{sec:backbone}

The third surface is the architecture itself.
Our model is a dual-stream MM-DiT: image tokens $h$ and caption tokens $e$ interact through joint attention in every block~\citep{esser2024_sd3}.
We impose three structural constraints \emph{during public pretraining} and keep them at fine-tuning and generation:

\begin{enumerate}
\item \textbf{Stream-norm clamp.} Between blocks, every token of both streams is projected onto the ball of radius $\Bo$: $h\mapsto h\cdot\min(1,\Bo/\|h\|)$.
\item \textbf{Late injection.} Cross-modal attention inflow is \emph{exactly zero} outside the last $\lo$ blocks: an architectural zero.
\item \textbf{Decoupled attention.} Self- and cross-attention use independent softmax normalizers, so the image-stream update decomposes exactly as $h\mapsto F_\ell(h)+G_\ell(h,e)$ with $F_\ell$ caption-independent; the exact decomposition lets each cross-stream contribution be bounded and ablated separately.
\end{enumerate}

The stream clamp does the utility work: it improves plain DP training with a gain that scales with the noise, while the other two constraints localize and bound cross-modal interaction.
\textbf{The constraints must be active throughout public pretraining}: optimized inside the constraint set from the start, the model matches its unconstrained twin's pretraining loss, while imposing the same targets post hoc on a model pretrained without them collapses it.
The experiments take the three surfaces in order: the release and its allocation, the backbone under noise, and the two injection ports at scale.

\section{Experiments}\label{sec:exp}

\subsection{Experimental setup}\label{sec:setup}
The controlled experiments run on KMNIST$\to$MNIST transfer, with Kuzushiji-MNIST~\citep{clanuwat2018_kmnist} as the public domain $\Dpub$ and MNIST~\citep{lecun1998_mnist} as the private set $\D$ (the domains share no glyphs), at $\deltadp{=}10^{-5}$, in a \emph{pixel-space} and a \emph{latent-space} variant (MM-DiT with $6$ blocks, width $192$). The latent variant works in the $d{=}32$ code of a frozen public convolutional autoencoder.
Every main-text result except Table~\ref{tab:sd3} reports this setting, each configuration under a single $\epsdp$ budget; configurations with the release spend $0.235\epsdp$ on Stage~1 and the rest on DP-SGD (a declared release weight of $0.2$, renormalized over the active stages; the accounting transcripts report the exact split).
Three further studies answer targeted questions: a Fashion-MNIST$\to$MNIST replication~\citep{xiao2017_fashionmnist} checks the gains are not public-domain specific (reported inline below); composed-MNIST extends the release beyond class labels to multi-attribute captions; and DP-LoRA fine-tuning of the frozen 2B-parameter SD3-medium on Flowers-102 tests which components survive at scale (Table~\ref{tab:sd3}).
Utility is downstream accuracy of a classifier trained on generated (image, label) pairs and tested on real data.
Quality uses Fr\'echet inception distance (FID), computed conventionally for comparability (Inception-V3 pool-3 features of grayscale samples replicated to three channels and resized to $299^2$; $10^4$ samples against the full test set) and corroborated by precision/recall in the appendix.
All results are mean$\pm$std over 3 seeds unless noted, and lower-is-better metrics are marked $\downarrow$ throughout.
The two metrics are read jointly: where a configuration trades one axis for the other, both movements are reported and described as a trade, not an improvement.
Published DP diffusion results~\citep{dockhorn2022_dpdm,ghalebikesabi2023_dpdiffusion} are not quoted as baselines: they arise from orders-of-magnitude larger batches, schedules, and evaluation sets, so quoting them would compare compute classes, not mechanisms. Their techniques run inside the protocol instead: noise multiplicity~\citep{dockhorn2022_dpdm} in every DP-SGD configuration and the from-scratch baseline, DP-LDM-style adapter fine-tuning~\citep{lyu2023_dpldm} as the SD3 uniform baseline.
One command reproduces each run (hyperparameters: Appendix~\ref{app:hyper}).

\subsection{Main results: the release drives utility}\label{sec:mainresults}

\begin{table*}[t]
\centering
\footnotesize
\setlength{\tabcolsep}{3.4pt}
\renewcommand{\arraystretch}{0.96}
\caption{Main results (MNIST, $\deltadp{=}10^{-5}$, 3 seeds, mean$\pm$std). Acc = downstream accuracy, $\downarrow$ marks lower-is-better, and bold is the best per column within each pipeline block. The first block runs prior methods under the same protocol and accounting~\citep{harder2021_dpmerf,vinaroz2022_dphp,yang2023_dpntk,lin2024_pe}. The released-moments-only rows spend the \emph{entire} column budget on the Stage-1 mechanism and take no gradient step, a deliberately favorable full-budget control for the release alone; the moments rows spend $0.235\epsdp$ on it. The pixel and latent blocks share the same public (KMNIST) pretraining.}
\label{tab:main}
\begin{tabular}{lcccccc}
\toprule
& \multicolumn{2}{c}{$\epsdp{=}0.3$} & \multicolumn{2}{c}{$\epsdp{=}1$} & \multicolumn{2}{c}{$\epsdp{=}3$} \\
\cmidrule(lr){2-3}\cmidrule(lr){4-5}\cmidrule(lr){6-7}
Method & Acc & FID$\downarrow$ & Acc & FID$\downarrow$ & Acc & FID$\downarrow$ \\
\midrule
\multicolumn{7}{@{}l}{\emph{Prior methods}} \\
DP-MERF (kernel embedding) & $0.823\pm0.004$ & $280.6\pm0.8$ & $0.856\pm0.005$ & $182.8\pm5.0$ & $0.842\pm0.011$ & $109.1\pm2.4$ \\
DP-HP (Hermite embedding) & $0.762\pm0.010$ & $284.3\pm5.0$ & $0.751\pm0.008$ & $219.8\pm4.5$ & $0.780\pm0.006$ & $167.3\pm5.6$ \\
DP-NTK (NTK embedding) & $0.737\pm0.019$ & $333.8\pm3.0$ & $0.786\pm0.012$ & $292.7\pm3.4$ & $0.811\pm0.012$ & $249.1\pm5.7$ \\
Private Evolution (frozen prior) & $0.044\pm0.026$ & $122.4\pm2.1$ & $0.059\pm0.025$ & $81.4\pm0.8$ & $0.082\pm0.022$ & $52.8\pm2.0$ \\
\midrule
\multicolumn{7}{@{}l}{\emph{Released moments only (no private gradient)}} \\
Sampled from the release $\N(\tilde\mu_y,\tilde\Sigma)$ & $0.741\pm0.020$ & $301.0\pm5.2$ & $0.762\pm0.015$ & $236.6\pm1.6$ & $0.797\pm0.020$ & $241.9\pm0.7$ \\
Public prior $+$ guidance & $0.246\pm0.018$ & $97.1\pm1.4$ & $0.236\pm0.026$ & $89.3\pm0.8$ & $0.222\pm0.008$ & $87.0\pm0.4$ \\
\midrule
\multicolumn{7}{@{}l}{\emph{Pixel space}} \\
Uniform DP-SGD & $0.055\pm0.005$ & $\mathbf{72.5\pm0.3}$ & $0.207\pm0.011$ & $\mathbf{62.7\pm0.1}$ & $0.620\pm0.003$ & $51.6\pm0.1$ \\
THA-planned DP-SGD & $0.056\pm0.002$ & $74.5\pm0.5$ & $0.226\pm0.050$ & $63.5\pm1.1$ & $0.641\pm0.086$ & $51.8\pm2.1$ \\
Uniform + moments & $0.484\pm0.024$ & $88.4\pm2.0$ & $0.706\pm0.005$ & $63.6\pm0.7$ & $\mathbf{0.859\pm0.005}$ & $\mathbf{49.3\pm0.5}$ \\
THA + moments & $0.443\pm0.047$ & $94.1\pm2.7$ & $\mathbf{0.724\pm0.033}$ & $66.7\pm0.6$ & $0.848\pm0.004$ & $51.5\pm0.3$ \\
\quad + Low-$\Bo$ backbone & $\mathbf{0.561\pm0.038}$ & $100.3\pm2.6$ & $0.697\pm0.003$ & $75.1\pm1.1$ & $0.788\pm0.004$ & $69.5\pm0.7$ \\
\midrule
\multicolumn{7}{@{}l}{\emph{Latent space (frozen public autoencoder, $d{=}32$)}} \\
Uniform DP-SGD & $0.434\pm0.027$ & $\mathbf{63.3\pm0.6}$ & $0.709\pm0.016$ & $58.4\pm0.4$ & $0.808\pm0.010$ & $51.7\pm0.5$ \\
THA-planned DP-SGD & $0.408\pm0.016$ & $63.3\pm0.3$ & $0.695\pm0.024$ & $59.2\pm0.6$ & $0.798\pm0.013$ & $53.1\pm0.5$ \\
Uniform + moments & $\mathbf{0.695\pm0.015}$ & $63.8\pm1.7$ & $\mathbf{0.814\pm0.002}$ & $\mathbf{55.7\pm0.6}$ & $\mathbf{0.843\pm0.006}$ & $\mathbf{49.6\pm0.3}$ \\
THA + moments & $0.682\pm0.010$ & $64.1\pm1.3$ & $0.811\pm0.002$ & $56.5\pm0.8$ & $0.839\pm0.007$ & $50.2\pm0.1$ \\
\quad + Low-$\Bo$ backbone & $0.633\pm0.012$ & $64.7\pm1.6$ & $0.792\pm0.006$ & $57.6\pm0.2$ & $0.835\pm0.006$ & $51.5\pm0.2$ \\
\bottomrule
\end{tabular}
\end{table*}

Does the released low-$t$ target beat spending the whole budget on gradients, and does the flow model add anything beyond the release? Three observations (Table~\ref{tab:main}, Figure~\ref{fig:epscurve}).
\begin{figure}[t]
\centering
\includegraphics[width=0.57\columnwidth]{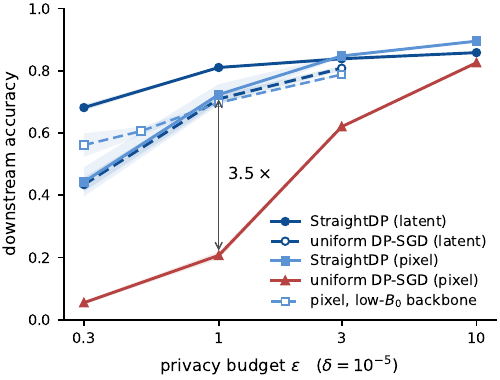}
\caption{MNIST downstream accuracy vs.\ privacy budget $\epsdp$ ($\deltadp{=}10^{-5}$; bands $\pm1$ std over 3 seeds), each representation with its baseline. The low-$\Bo$ variant (dashed, open squares) lifts the strong-privacy end. The bracket marks the $3.5\times$ pixel gap over uniform DP-SGD at $\epsdp{=}1$.}
\label{fig:epscurve}
\end{figure}

\begin{table*}[t]
\centering
\footnotesize
\setlength{\tabcolsep}{2.8pt}
\caption{The low-$\Bo$ backbone, with arrows as in Table~\ref{tab:main} and the bold row the full backbone. \textbf{Left}: single-factor ablation at $\epsdp{=}0.3$ (3 seeds, mean$\pm$std). The unconstrained and full rows are the pixel moments and low-$\Bo$ rows of Table~\ref{tab:main}. \textbf{Right}: the noise-regime sweep on pixels, whose signed gain compares the backbones.}
\label{tab:backbone}
\begin{minipage}[t]{0.435\linewidth}
\centering
\begin{tabular}{@{}lcc@{}}
\toprule
Backbone & Acc & FID$\downarrow$ \\
\midrule
Unconstrained & $0.443\pm0.047$ & $94.1\pm2.7$ \\
+ Late injection only & $0.402\pm0.014$ & $103.0\pm1.2$ \\
+ Stream clamp only & $0.535\pm0.014$ & $98.7\pm1.5$ \\
\textbf{Full backbone} & $0.561\pm0.038$ & $100.3\pm2.6$ \\
\bottomrule
\end{tabular}
\end{minipage}\hfill
\begin{minipage}[t]{0.555\linewidth}
\centering
\begin{tabular}{@{}lccccr@{}}
\toprule
& \multicolumn{2}{c}{Standard} & \multicolumn{2}{c}{Low-$\Bo$} & \\
\cmidrule(lr){2-3}\cmidrule(lr){4-5}
$\epsdp$ & Acc & FID$\downarrow$ & Acc & FID$\downarrow$ & Acc gain \\
\midrule
$3.0$ & $0.848\pm0.004$ & $51.5\pm0.3$ & $0.788\pm0.004$ & $69.5\pm0.7$ & $-7\%$ \\
$1.0$ & $0.724\pm0.033$ & $66.7\pm0.6$ & $0.697\pm0.003$ & $75.1\pm1.1$ & $-4\%$ \\
$0.5$ & $0.584\pm0.032$ & $81.2\pm5.0$ & $0.606\pm0.014$ & $85.7\pm3.9$ & $+4\%$ \\
$0.3$ & $0.443\pm0.047$ & $94.1\pm2.7$ & $0.561\pm0.038$ & $100.3\pm2.6$ & $+27\%$ \\
\bottomrule
\end{tabular}
\end{minipage}
\end{table*}

(1) The released-moments-only block isolates the release, spending each column's whole budget on it.
Sampling it directly, which is equivalent to integrating the analytic field of the released Gaussian moment model to $t{=}1$, already carries strong probe accuracy ($0.74$--$0.80$) with extremely poor label-blind FID ($237$--$301$) and visibly prototype-like samples (Figure~\ref{fig:grids}), the class-prototype effect the embedding family also exhibits.
Guiding the public prior with the release but \emph{no} private gradient remains ineffective on this wrong-domain prior ($0.22$--$0.25$).
Private flow training converts those prototype-like samples into far higher-quality conditional generation while keeping most of the class signal: in pixel space it turns the domain-transfer failure into $0.724$ at $\epsdp{=}1$ (FID $66.7$), and in latent space it improves \emph{both} axes over the release alone, $0.762\to0.811$ accuracy and $236.6\to56.5$ FID.
The THA plan alone moves utility within seed noise, and the uniform~$+$~moments rows match THA~$+$~moments at every budget in both representations: the release does not depend on the plan.
(2) The advantage grows as privacy strengthens ($8\times$ at $\epsdp{=}0.3$ in pixel space; in latent space the representation alone reaches $0.434$ and the release lifts it to $0.682$, a $1.57\times$ gain that shrinks to $1.04\times$ at $\epsdp{=}3$) and persists at weak privacy: pixel at $\epsdp{=}3$ already beats the baseline at $\epsdp{=}10$ ($0.848$ vs.\ $0.826$).
(3) A Fashion-MNIST public domain replicates the gain ($3.0$--$3.2\times$), and utility is insensitive to the Stage-1 share ($0.700/0.724/0.735$ over declared weights $\rho_{\mathrm{mom}}\in\{0.1,0.2,0.3\}$, effective shares $0.12$--$0.35$).
The prior-method block frames these numbers, and no method in the table dominates: the comparison is a Pareto front over accuracy and FID, and every comparison below reports both axes, with improvements named by metric unless both move. The embedding family attains the highest probe accuracies while its FID never enters a usable regime, and neither successor improves on DP-MERF under this uniform protocol. Private Evolution~\citep{lin2024_pe} fails in the opposite direction: its Inception-space votes drive FID to $35.4$ at $\epsdp{=}10$, below every training route, yet accuracy never exceeds chance, since votes only reweight what the frozen prior generates and cannot rebind its class conditioning. 
The sample grids of Figure~\ref{fig:grids} make the disagreement visible, and precision/recall in the same feature space explains it structurally (Appendix~\ref{sec:pr}).
Uniform DP-SGD holds the best \emph{label-blind} scores at every budget precisely because its transfer fails, leaving clean wrong-domain glyphs whose low-level statistics flatter any label-blind metric: a cleanly wrong marginal beats a noisily right one.
Label-aware metrics confirm it: at $\epsdp{=}1$ label consistency is $0.17$ vs.\ our $0.44$, and per-class FID reverses it (Table~\ref{tab:condmetrics}).

The latent variant's low precision is a decoder-smoothing artifact, flat in $\epsdp$ and invisible to the stroke-topology features the downstream classifier reads.
The stream clamp trades recall for gradient signal-to-noise (the $+27\%$ at $\epsdp{=}0.3$).

The release also generalizes beyond class labels: replacing the partition by the cross-moment release \eqref{eq:crossmom} preserves the mechanism and accounting, and the more discrete the featurization, the closer it comes to the class-conditional ceiling: at $\epsdp{=}1$, per-slot indicators reach $0.632\pm0.067$ and the pooled embedding of the frozen text encoder $0.464\pm0.049$, against the one-hot reference $0.724\pm0.033$.
The gap is noise in the released $M_{cc}$, not a mechanism change: it narrows monotonically with budget ($0.345$ vs.\ $0.443$ at $\epsdp{=}0.3$, $0.829$ vs.\ $0.848$ at $\epsdp{=}3$ for indicators), converging to the one-hot special case exactly as the inversion of $M_{cc}+\lambda I$ sharpens.
On composed captions the discrete featurizations condition the non-class attributes essentially perfectly (quadrant $1.00$, stroke $0.84$--$1.00$; Appendix~\ref{app:textmom}).
Both geometric corrections of the release are essential: removing the image-side projection drops accuracy to $0.32$, and removing the centering and released intercept to $0.12$, below the pipeline with no release.

\subsection{The stream-norm constraint under noise}\label{sec:backboneexp}

When does constraining the backbone help, and what does it cost when it does not? The right panel of Table~\ref{tab:backbone} shows the effect \emph{grows monotonically with noise}, $-7\%$, $-4\%$, $+4\%$, $+27\%$ as $\epsdp$ falls from $3$ to $0.3$, lifting strong-privacy accuracy at a FID concession ($94.1\to100.3$) and costing both axes at weak privacy.
The left panel ablates the three constraints at $\epsdp{=}0.3$: the stream clamp carries most of the gain ($0.443\to0.535$ of the full $0.561$), late injection alone is a small net cost ($0.402$), and the full set adds the remainder.
The gain does not transfer to the latent variant, where the constraint is a small net cost at every budget ($0.682\to0.633$ at $\epsdp{=}0.3$ with FID unchanged; Table~\ref{tab:main}), consistent with the mechanism below: the $32$-dimensional latent model's per-sample gradients are already well concentrated, so the clamp only removes capacity.
Mechanistically, bounding activations empirically concentrates per-sample gradient norms, narrowing the upper tail that clipping hits hardest (the $p_{90}/p_{50}$ ratio falls from $1.61$ to $1.46$ at $\epsdp{=}0.3$; Appendix~\ref{app:gradstats}), so DP clipping distorts less exactly when $\sigma$ is large.
Unlike globally Lipschitz networks, whose expressivity cost is documented~\citep{anil2019_lipschitz}, the clamp carries no detectable pretraining-loss penalty, necessary if not sufficient evidence of preserved capacity, while post-hoc clamping of an unconstrained model collapses it to $0.02$.
The recommendation: moments with uniform DP-SGD, in the latent representation when available, with the clamp for extreme-noise pixels.

\subsection{Scaling: the injection route decides}\label{sec:sd3}
Which components survive on frozen SD3-medium?
We DP-LoRA fine-tune SD3-medium on Flowers-102~\citep{nilsback2008_flowers} as the private set ($N{=}2040$, $\epsdp{=}10$, $\deltadp{=}1/2N$), with the base model itself as the public prior: its samples calibrate all clips and projections. DP-SGD and the accounting port unchanged.
Stage~2 uses a single uniform $(C,\sigma)$ so Table~\ref{tab:sd3} isolates the injection route. The guidance route trains no weights, so there is nothing to plan.
The bounded backbone does not port: its constraints are imposed during pretraining, which a frozen checkpoint forecloses.
The base model generates flowers, a \emph{strong-prior} domain: zero-shot reaches $0.588\pm0.009$, and uniform DP-SGD does not beat it ($0.580\pm0.004$).

\begin{table}[tb]
\centering
\footnotesize
\setlength{\tabcolsep}{2.6pt}
\renewcommand{\arraystretch}{0.96}
\caption{SD3-medium on Flowers-102 (3 seeds, mean$\pm$std). Acc = downstream accuracy (CLIP probe), the $\epsdp$ column is each row's total accounted budget, and bold is the best private configuration. Private Evolution~\citep{lin2024_pe} uses the frozen base model. The distillation row writes the moments into LoRA and spends the remainder on DP-SGD ($2{+}8$), while guidance injects the \emph{same} moments at sampling time.}
\label{tab:sd3}
\begin{tabular}{lccc}
\toprule
Method & $\epsdp$ & Acc & FID$\downarrow$ \\
\midrule
Zero-shot (no tuning) & $0$ & $0.588\pm0.009$ & $40.2\pm0.6$ \\
\midrule
Uniform DP-SGD & $10$ & $0.580\pm0.004$ & $37.9\pm0.6$ \\
Uniform DP-SGD & $2$ & $0.603\pm0.007$ & $38.6\pm0.6$ \\
Private Evolution & $10$ & $0.593\pm0.015$ & $43.5\pm0.6$ \\
Distillation $+$ DP-SGD & $10$ & $0.088\pm0.018$ & $105.7\pm1.4$ \\
\textbf{Moment guidance} & $\mathbf{2}$ & $\mathbf{0.625\pm0.005}$ & $\mathbf{37.8\pm0.5}$ \\
DP-SGD $+$ guidance & $10.4$ & $0.614\pm0.013$ & $38.0\pm1.0$ \\
\midrule
Non-private LoRA & $\infty$ & $0.657\pm0.019$ & $35.8\pm1.0$ \\
\quad $+$ guidance & $\infty$ & $0.678\pm0.013$ & $35.4\pm0.6$ \\
\bottomrule
\end{tabular}
\end{table}

\textbf{On frozen SD3, the released moments belong in sampling, not in the weights.}
The global first-moment release scales (at $N{=}2040$ per-class signal-to-noise is too low, so Stage~1 falls back to a global mean tilt). What fails is \emph{distilling} it into weights.
Distilling the analytic field into LoRA is catastrophic ($0.088$) even though the field is released accurately and the loss converges: the Gaussian is cruder than the frozen prior it overwrites, and the damage leaks across all noise levels through the shared adapter.
SD3 parameterizes the flow by the noise level $\sigma=1-t$ ($\sigma{=}1$ the noise end; not the DP noise multiplier). Guidance is pure post-processing of the \emph{same} release: sampling starts at its tilted marginal at level $\sigma_0$ and the base model integrates the rest. This reaches $0.625\pm0.005$ while consuming only $\epsdp{=}2$ in total.
The start level is selection-insensitive, stable across $\sigma_0\in[0.90,0.98]$, and the $\sigma_0{=}0.95$ we use is the grid midpoint, so no private tuning is implied.
The budget-matched control confirms the advantage is not a budget artifact: uniform DP-SGD at $\epsdp{=}2$ reaches $0.603\pm0.007$, above its $\epsdp{=}10$ counterpart ($0.580$) because the lighter fine-tuning damages the prior less, yet below guidance at the same budget: the ranking $0.580,0.588,0.603,0.625$ tracks how little the weights move.
Specificity controls isolate the released direction as the payload: starting at the same $\sigma_0$ and covariance with a zero mean or a norm-matched random tilt falls below zero-shot ($0.548\pm0.016$ and $0.555\pm0.021$, FID $\approx64$), the public-prior mean recovers part ($0.606\pm0.007$), and the released direction reaches $0.625$--$0.632$ across $\epsdp\in\{0.5,1,4\}$, matched by exact non-private moments ($0.624\pm0.006$): the payload is the direction, not its precision.
Private Evolution, the other training-free route, evolves a per-class population by noisy nearest-neighbor votes.
With the full $\epsdp{=}10$ it moves the frozen prior only within seed noise ($0.593\pm0.015$ vs.\ zero-shot $0.588$) and trails guidance on both metrics at $5\times$ the budget (FID $43.5$, degraded by small-population resampling). Votes only reweight what the prior generates (hence PE's collapse on weak-prior MNIST, Table~\ref{tab:main}). The moments supply the geometry it lacks.
The embedding family~\citep{harder2021_dpmerf} has no published variant at this resolution (its comparison is Table~\ref{tab:main}).
Adding guidance to training yields little more ($0.614$ at joint $\epsdp{=}10.4$; $0.678$ at $\epsdp{=}\infty$): both approach the same ceiling, and the tilt already captures the recoverable signal.
\begin{figure}[t]
\centering
\includegraphics[width=0.60\columnwidth]{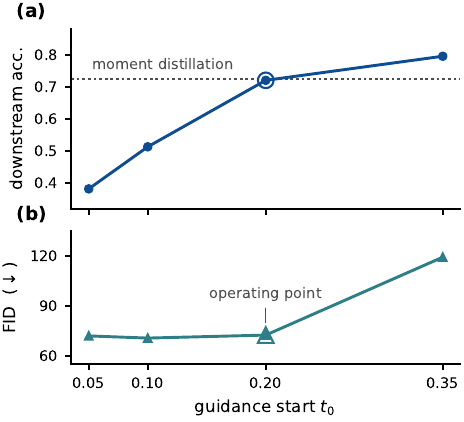}
\caption{The sampling port on MNIST: a DP-SGD model trained \emph{without} moment distillation, guided at sampling time by the Stage-1 release alone.
The sweep is a post-hoc diagnostic at total budget $1.2$ ($\epsdp{=}1$ training $+$ $0.2$ release, single seed). The $t_0{=}0.2$ we use is the public plateau-edge choice, and the budget-matched $0.8{+}0.2$ control gives $0.696\pm0.005$. (a)~Accuracy crosses moment distillation's level (dotted) at $t_0{=}0.2$. (b)~FID is flat there and degrades beyond it.}
\label{fig:guidancet0}
\end{figure}

The tilt helps the non-private model by a similar margin, so its DP-specific value is not the shift itself but that it captures most of the private signal \emph{without any private training}: at $\epsdp{=}2$ it beats every trained configuration at $5\times$ the budget.

\textbf{The two injection ports carry the same release at two scales} (class-conditional on MNIST, the global tilt on SD3).
The start time is calibrated publicly: on the public domain, guidance accuracy is flat for $t_0\le0.2$ and public FID degrades sharply beyond it, so we use the plateau edge $t_0{=}0.2$.
On MNIST, guiding a DP-SGD model with no distillation reaches $0.696\pm0.005$ at matched $\epsdp{=}1$, near distillation's $0.724$ (Figure~\ref{fig:guidancet0}).
On frozen SD3-medium, distillation collapses ($0.088$) and only guidance works ($0.625$).

The suggested split: distill into a model trained end-to-end, guide a strong frozen prior.

\subsection{Attack audits}\label{sec:auditsmain}
Accounted budgets are worst-case statements. Three attacks probe the final models (attack failure bounds nothing from above; protocols in Appendix~\ref{sec:audits}).
Membership inference on the flow-matching residual is chance-level (AUC $0.482$). None of $50$ canaries is preferentially generated, and a one-run audit is vacuous.
Transcript replay reproduces every declared $\epsdp$, and attack advantages are consistent with the accounted guarantees.

\section{Conclusion}\label{sec:conclusion}
StraightDP starts from one observation: what a rectified flow learns from private data is not uniform in flow time.
From it, the pipeline releases class-conditional moments once, spends the rest through declared DP-SGD, and pretrains a stream-bounded backbone for the strongest-noise regime.
One release enters an end-to-end model's weights and a frozen prior's sampler.
The training-free baselines we test land on one axis, the latent variant moves both, and per-caption release at scale is the open frontier.

\newpage

\bibliography{references}

@inproceedings{abadi2016_dpsgd,
  title={Deep Learning with Differential Privacy},
  author={Martín Abadi and Andy Chu and Ian Goodfellow and H. Brendan McMahan and Ilya Mironov and Kunal Talwar and Li Zhang},
  booktitle={Proceedings of the 2016 ACM SIGSAC Conference on Computer and Communications Security (CCS)},
  year={2016}
}

@inproceedings{albergo2022_interpolants,
  title={Building Normalizing Flows with Stochastic Interpolants},
  author={Michael S. Albergo and Eric Vanden-Eijnden},
  booktitle={International Conference on Learning Representations (ICLR)},
  year={2023}
}

@inproceedings{andrew2021_adaptiveclipping,
  title={Differentially Private Learning with Adaptive Clipping},
  author={Galen Andrew and Om Thakkar and H. Brendan McMahan and Swaroop Ramaswamy},
  booktitle={Advances in Neural Information Processing Systems 34},
  year={2021}
}

@inproceedings{anil2019_lipschitz,
  title={Sorting out Lipschitz function approximation},
  author={Cem Anil and James Lucas and Roger Grosse},
  booktitle={Proceedings of the 36th International Conference on Machine Learning (ICML)},
  year={2019}
}

@inproceedings{balle2018_subsampling,
  title={Privacy Amplification by Subsampling: Tight Analyses via Couplings and Divergences},
  author={Borja Balle and Gilles Barthe and Marco Gaboardi},
  booktitle={Advances in Neural Information Processing Systems 31},
  year={2018}
}

@inproceedings{bethune2023_dpsgdclipless,
  title={DP-SGD Without Clipping: The Lipschitz Neural Network Way},
  author={Louis Béthune and Thomas Massena and Thibaut Boissin and Yannick Prudent and Corentin Friedrich and Franck Mamalet and Aurélien Bellet and Mathieu Serrurier and David Vigouroux},
  booktitle={International Conference on Learning Representations (ICLR)},
  year={2024}
}

@article{bie2023_privategans,
  title={Private GANs, Revisited},
  author={Alex Bie and Gautam Kamath and Guojun Zhang},
  journal={Transactions on Machine Learning Research},
  year={2023}
}

@inproceedings{bu2022_autoclip,
  title={Automatic Clipping: Differentially Private Deep Learning Made Easier and Stronger},
  author={Zhiqi Bu and Yu-Xiang Wang and Sheng Zha and George Karypis},
  booktitle={Advances in Neural Information Processing Systems 36},
  year={2023}
}

@inproceedings{bun2016_zcdp,
  title={Concentrated Differential Privacy: Simplifications, Extensions, and Lower Bounds},
  author={Mark Bun and Thomas Steinke},
  booktitle={Theory of Cryptography Conference (TCC)},
  year={2016}
}

@inproceedings{cao2021_dpsinkhorn,
  title={Don't Generate Me: Training Differentially Private Generative Models with Sinkhorn Divergence},
  author={Tianshi Cao and Alex Bie and Arash Vahdat and Sanja Fidler and Karsten Kreis},
  booktitle={Advances in Neural Information Processing Systems 34},
  year={2021}
}

@inproceedings{carlini2019_secretsharer,
  title={The Secret Sharer: Evaluating and Testing Unintended Memorization in Neural Networks},
  author={Nicholas Carlini and Chang Liu and Úlfar Erlingsson and Jernej Kos and Dawn Song},
  booktitle={28th USENIX Security Symposium},
  year={2019}
}

@inproceedings{carlini2022_miafp,
  title={Membership Inference Attacks From First Principles},
  author={Nicholas Carlini and Steve Chien and Milad Nasr and Shuang Song and Andreas Terzis and Florian Tramer},
  booktitle={2022 IEEE Symposium on Security and Privacy (S\&P)},
  year={2022}
}

@inproceedings{chen2020_gswgan,
  title={GS-WGAN: A Gradient-Sanitized Approach for Learning Differentially Private Generators},
  author={Dingfan Chen and Tribhuvanesh Orekondy and Mario Fritz},
  booktitle={Advances in Neural Information Processing Systems 33},
  year={2020}
}

@article{clanuwat2018_kmnist,
  title={Deep Learning for Classical Japanese Literature},
  author={Tarin Clanuwat and Mikel Bober-Irizar and Asanobu Kitamoto and Alex Lamb and Kazuaki Yamamoto and David Ha},
  journal={arXiv preprint arXiv:1812.01718},
  year={2018}
}

@article{de2022_unlocking,
  title={Unlocking High-Accuracy Differentially Private Image Classification through Scale},
  author={Soham De and Leonard Berrada and Jamie Hayes and Samuel L. Smith and Borja Balle},
  journal={arXiv preprint arXiv:2204.13650},
  year={2022}
}

@article{dockhorn2022_dpdm,
  title={Differentially Private Diffusion Models},
  author={Tim Dockhorn and Tianshi Cao and Arash Vahdat and Karsten Kreis},
  journal={Transactions on Machine Learning Research},
  year={2023}
}

@article{dong2019_gdp,
  title={Gaussian Differential Privacy},
  author={Jinshuo Dong and Aaron Roth and Weijie J. Su},
  journal={Journal of the Royal Statistical Society Series B},
  volume={84}, number={1}, pages={3--37},
  year={2022}
}

@article{doroshenko2022_connectdots,
  title={Connect the Dots: Tighter Discrete Approximations of Privacy Loss Distributions},
  author={Vadym Doroshenko and Badih Ghazi and Pritish Kamath and Ravi Kumar and Pasin Manurangsi},
  journal={arXiv preprint arXiv:2207.04380},
  year={2022}
}

@inproceedings{esser2024_sd3,
  title={Scaling Rectified Flow Transformers for High-Resolution Image Synthesis},
  author={Patrick Esser and Sumith Kulal and Andreas Blattmann and Rahim Entezari and Jonas Müller and Harry Saini and Yam Levi and Dominik Lorenz and Axel Sauer and Frederic Boesel and Dustin Podell and Tim Dockhorn and Zion English and Kyle Lacey and Alex Goodwin and Yannik Marek and Robin Rombach},
  booktitle={Proceedings of the 41st International Conference on Machine Learning (ICML)},
  year={2024}
}

@inproceedings{ganesh2023_whypublic,
  title={Why Is Public Pretraining Necessary for Private Model Training?},
  author={Arun Ganesh and Mahdi Haghifam and Milad Nasr and Sewoong Oh and Thomas Steinke and Om Thakkar and Abhradeep Thakurta and Lun Wang},
  booktitle={Proceedings of the 40th International Conference on Machine Learning (ICML)},
  year={2023}
}

@article{ghalebikesabi2023_dpdiffusion,
  title={Differentially Private Diffusion Models Generate Useful Synthetic Images},
  author={Sahra Ghalebikesabi and Leonard Berrada and Sven Gowal and Ira Ktena and Robert Stanforth and Jamie Hayes and Soham De and Samuel L. Smith and Olivia Wiles and Borja Balle},
  journal={arXiv preprint arXiv:2302.13861},
  year={2023}
}

@inproceedings{gopi2021_numerical,
  title={Numerical Composition of Differential Privacy},
  author={Sivakanth Gopi and Yin Tat Lee and Lukas Wutschitz},
  booktitle={Advances in Neural Information Processing Systems 34},
  year={2021}
}

@inproceedings{harder2021_dpmerf,
  title={DP-MERF: Differentially Private Mean Embeddings with Random Features for Practical Privacy-Preserving Data Generation},
  author={Frederik Harder and Kamil Adamczewski and Mijung Park},
  booktitle={Proceedings of the 24th International Conference on Artificial Intelligence and Statistics (AISTATS)},
  year={2021}
}

@inproceedings{vinaroz2022_dphp,
  title={Hermite Polynomial Features for Private Data Generation},
  author={Margarita Vinaroz and Mohammad-Amin Charusaie and Frederik Harder and Kamil Adamczewski and Mi Jung Park},
  booktitle={Proceedings of the 39th International Conference on Machine Learning (ICML)},
  year={2022}
}

@article{yang2023_dpntk,
  title={Differentially Private Neural Tangent Kernels for Privacy-Preserving Data Generation},
  author={Yilin Yang and Kamil Adamczewski and Danica J. Sutherland and Xiaoxiao Li and Mijung Park},
  journal={arXiv preprint arXiv:2303.01687},
  year={2023}
}

@inproceedings{lin2024_pe,
  title={Differentially Private Synthetic Data via Foundation Model {APIs} 1: Images},
  author={Zinan Lin and Sivakanth Gopi and Janardhan Kulkarni and Harsha Nori and Sergey Yekhanin},
  booktitle={International Conference on Learning Representations (ICLR)},
  year={2024}
}

@inproceedings{heusel2017_fid,
  title={GANs Trained by a Two Time-Scale Update Rule Converge to a Local Nash Equilibrium},
  author={Martin Heusel and Hubert Ramsauer and Thomas Unterthiner and Bernhard Nessler and Sepp Hochreiter},
  booktitle={Advances in Neural Information Processing Systems 30},
  year={2017}
}

@inproceedings{jagielski2020_auditing,
  title={Auditing Differentially Private Machine Learning: How Private is Private SGD?},
  author={Matthew Jagielski and Jonathan Ullman and Alina Oprea},
  booktitle={Advances in Neural Information Processing Systems 33},
  year={2020}
}

@inproceedings{kairouz2015_composition,
  title={The Composition Theorem for Differential Privacy},
  author={Peter Kairouz and Sewoong Oh and Pramod Viswanath},
  booktitle={Proceedings of the 32nd International Conference on Machine Learning (ICML)},
  year={2015}
}

@inproceedings{kim2021_lipschitzattn,
  title={The Lipschitz Constant of Self-Attention},
  author={Hyunjik Kim and George Papamakarios and Andriy Mnih},
  booktitle={Proceedings of the 38th International Conference on Machine Learning (ICML)},
  year={2021}
}

@inproceedings{koskela2020_fft,
  title={Computing Tight Differential Privacy Guarantees Using FFT},
  author={Antti Koskela and Joonas Jälkö and Antti Honkela},
  booktitle={Proceedings of the 23rd International Conference on Artificial Intelligence and Statistics (AISTATS)},
  year={2020}
}

@article{kurakin2022_imagenetdp,
  title={Toward Training at ImageNet Scale with Differential Privacy},
  author={Alexey Kurakin and Shuang Song and Steve Chien and Roxana Geambasu and Andreas Terzis and Abhradeep Thakurta},
  journal={arXiv preprint arXiv:2201.12328},
  year={2022}
}

@inproceedings{kynkaanniemi2019_pr,
  title={Improved Precision and Recall Metric for Assessing Generative Models},
  author={Tuomas Kynkäänniemi and Tero Karras and Samuli Laine and Jaakko Lehtinen and Timo Aila},
  booktitle={Advances in Neural Information Processing Systems 32},
  year={2019}
}

@inproceedings{lee2024_improvingrf,
  title={Improving the Training of Rectified Flows},
  author={Sangyun Lee and Zinan Lin and Giulia Fanti},
  booktitle={Advances in Neural Information Processing Systems 37},
  year={2024}
}

@inproceedings{lipman2022_flowmatching,
  title={Flow Matching for Generative Modeling},
  author={Yaron Lipman and Ricky T. Q. Chen and Heli Ben-Hamu and Maximilian Nickel and Matt Le},
  booktitle={International Conference on Learning Representations (ICLR)},
  year={2023}
}

@inproceedings{liu2022_rectifiedflow,
  title={Flow Straight and Fast: Learning to Generate and Transfer Data with Rectified Flow},
  author={Xingchao Liu and Chengyue Gong and Qiang Liu},
  booktitle={International Conference on Learning Representations (ICLR)},
  year={2023}
}

@article{liu2022_rfot,
  title={Rectified Flow: A Marginal Preserving Approach to Optimal Transport},
  author={Qiang Liu},
  journal={arXiv preprint arXiv:2209.14577},
  year={2022}
}

@inproceedings{long2019_gpate,
  title={G-PATE: Scalable Differentially Private Data Generator via Private Aggregation of Teacher Discriminators},
  author={Yunhui Long and Boxin Wang and Zhuolin Yang and Bhavya Kailkhura and Aston Zhang and Carl A. Gunter and Bo Li},
  booktitle={Advances in Neural Information Processing Systems 34},
  year={2021}
}

@article{lyu2023_dpldm,
  title={DP-LDMs: Differentially Private Latent Diffusion Models},
  author={Saiyue Lyu and Michael F. Liu and Margarita Vinaroz and Mijung Park},
  journal={Transactions on Machine Learning Research},
  year={2024}
}

@inproceedings{mcmahan2018_recurrent,
  title={Learning Differentially Private Recurrent Language Models},
  author={H. Brendan McMahan and Daniel Ramage and Kunal Talwar and Li Zhang},
  booktitle={International Conference on Learning Representations (ICLR)},
  year={2018}
}

@article{mehta2022_transferdp,
  title={Large Scale Transfer Learning for Differentially Private Image Classification},
  author={Harsh Mehta and Abhradeep Thakurta and Alexey Kurakin and Ashok Cutkosky},
  journal={arXiv preprint arXiv:2205.02973},
  year={2022}
}

@inproceedings{mironov2017_rdp,
  title={Rényi Differential Privacy},
  author={Ilya Mironov},
  booktitle={2017 IEEE 30th Computer Security Foundations Symposium (CSF)},
  year={2017}
}

@article{mironov2019_sgm,
  title={Rényi Differential Privacy of the Sampled Gaussian Mechanism},
  author={Ilya Mironov and Kunal Talwar and Li Zhang},
  journal={arXiv preprint arXiv:1908.10530},
  year={2019}
}

@inproceedings{miyato2018_spectralnorm,
  title={Spectral Normalization for Generative Adversarial Networks},
  author={Takeru Miyato and Toshiki Kataoka and Masanori Koyama and Yuichi Yoshida},
  booktitle={International Conference on Learning Representations (ICLR)},
  year={2018}
}

@inproceedings{naeem2020_densitycoverage,
  title={Reliable Fidelity and Diversity Metrics for Generative Models},
  author={Muhammad Ferjad Naeem and Seong Joon Oh and Youngjung Uh and Yunjey Choi and Jaejun Yoo},
  booktitle={Proceedings of the 37th International Conference on Machine Learning (ICML)},
  year={2020}
}

@inproceedings{nasr2021_adversary,
  title={Adversary Instantiation: Lower Bounds for Differentially Private Machine Learning},
  author={Milad Nasr and Shuang Song and Abhradeep Thakurta and Nicolas Papernot and Nicholas Carlini},
  booktitle={2021 IEEE Symposium on Security and Privacy (S\&P)},
  year={2021}
}

@inproceedings{parmar2022_cleanfid,
  title={On Aliased Resizing and Surprising Subtleties in GAN Evaluation},
  author={Gaurav Parmar and Richard Zhang and Jun-Yan Zhu},
  booktitle={Proceedings of the IEEE/CVF Conference on Computer Vision and Pattern Recognition (CVPR)},
  year={2022}
}

@inproceedings{peebles2022_dit,
  title={Scalable Diffusion Models with Transformers},
  author={William Peebles and Saining Xie},
  booktitle={Proceedings of the IEEE/CVF International Conference on Computer Vision (ICCV)},
  year={2023}
}

@article{pichapati2019_adaclip,
  title={AdaCliP: Adaptive Clipping for Private SGD},
  author={Venkatadheeraj Pichapati and Ananda Theertha Suresh and Felix X. Yu and Sashank J. Reddi and Sanjiv Kumar},
  journal={arXiv preprint arXiv:1908.07643},
  year={2019}
}

@inproceedings{sajjadi2018_pr,
  title={Assessing Generative Models via Precision and Recall},
  author={Mehdi S. M. Sajjadi and Olivier Bachem and Mario Lucic and Olivier Bousquet and Sylvain Gelly},
  booktitle={Advances in Neural Information Processing Systems 31},
  year={2018}
}

@inproceedings{shokri2017_mia,
  title={Membership Inference Attacks against Machine Learning Models},
  author={Reza Shokri and Marco Stronati and Congzheng Song and Vitaly Shmatikov},
  booktitle={2017 IEEE Symposium on Security and Privacy (S\&P)},
  year={2017}
}

@inproceedings{steinke2023_onerun,
  title={Privacy Auditing with One (1) Training Run},
  author={Thomas Steinke and Milad Nasr and Matthew Jagielski},
  booktitle={Advances in Neural Information Processing Systems 36},
  year={2023}
}

@inproceedings{tramer2021_features,
  title={Differentially Private Learning Needs Better Features (or Much More Data)},
  author={Florian Tramèr and Dan Boneh},
  booktitle={International Conference on Learning Representations (ICLR)},
  year={2021}
}

@inproceedings{wang2019_subsampledrdp,
  title={Subsampled Rényi Differential Privacy and Analytical Moments Accountant},
  author={Yu-Xiang Wang and Borja Balle and Shiva Prasad Kasiviswanathan},
  booktitle={Proceedings of the 22nd International Conference on Artificial Intelligence and Statistics (AISTATS)},
  year={2019}
}

@article{xiao2017_fashionmnist,
  title={Fashion-MNIST: a Novel Image Dataset for Benchmarking Machine Learning Algorithms},
  author={Han Xiao and Kashif Rasul and Roland Vollgraf},
  journal={arXiv preprint arXiv:1708.07747},
  year={2017}
}

@article{xie2018_dpgan,
  title={Differentially Private Generative Adversarial Network},
  author={Liyang Xie and Kaixiang Lin and Shu Wang and Fei Wang and Jiayu Zhou},
  journal={arXiv preprint arXiv:1802.06739},
  year={2018}
}

@inproceedings{yeom2018_privacyrisk,
  title={Privacy Risk in Machine Learning: Analyzing the Connection to Overfitting},
  author={Samuel Yeom and Irene Giacomelli and Matt Fredrikson and Somesh Jha},
  booktitle={2018 IEEE 31st Computer Security Foundations Symposium (CSF)},
  year={2018}
}

@inproceedings{yu2022_dpfinetune,
  title={Differentially Private Fine-tuning of Language Models},
  author={Da Yu and Saurabh Naik and Arturs Backurs and Sivakanth Gopi and Huseyin A. Inan and Gautam Kamath and Janardhan Kulkarni and Yin Tat Lee and Andre Manoel and Lukas Wutschitz and Sergey Yekhanin and Huishuai Zhang},
  booktitle={International Conference on Learning Representations (ICLR)},
  year={2022}
}

@inproceedings{jordon2019_pategan,
  title={{PATE-GAN}: Generating Synthetic Data with Differential Privacy Guarantees},
  author={Jordon, James and Yoon, Jinsung and van der Schaar, Mihaela},
  booktitle={International Conference on Learning Representations (ICLR)},
  year={2019}
}

@article{sommer2019_privacylossclasses,
  title={Privacy Loss Classes: The Central Limit Theorem in Differential Privacy},
  author={Sommer, David M. and Meiser, Sebastian and Mohammadi, Esfandiar},
  journal={Proceedings on Privacy Enhancing Technologies},
  volume={2019}, number={2}, pages={245--269},
  year={2019}
}

@article{dwork2014_book,
  title={The Algorithmic Foundations of Differential Privacy},
  author={Dwork, Cynthia and Roth, Aaron},
  journal={Foundations and Trends in Theoretical Computer Science},
  volume={9}, number={3--4}, pages={211--407},
  year={2014}
}

@article{du2021_dynamicdp,
  title={Dynamic Differential-Privacy Preserving SGD},
  author={Jian Du and Song Li and Xiangyi Chen and Siheng Chen and Mingyi Hong},
  journal={arXiv preprint arXiv:2111.00173},
  year={2021}
}

@article{lecun1998_mnist,
  title={Gradient-Based Learning Applied to Document Recognition},
  author={Yann LeCun and L{\'e}on Bottou and Yoshua Bengio and Patrick Haffner},
  journal={Proceedings of the IEEE},
  volume={86},
  number={11},
  pages={2278--2324},
  year={1998}
}

@inproceedings{nilsback2008_flowers,
  title={Automated Flower Classification over a Large Number of Classes},
  author={Maria-Elena Nilsback and Andrew Zisserman},
  booktitle={Indian Conference on Computer Vision, Graphics and Image Processing},
  year={2008}
}

@article{tan2023privacy,
  title={Privacy Amplification for Wireless Federated Learning with R{\'e}nyi Differential Privacy and Subsampling},
  author={Qingjie Tan and Xujun Che and Shuhui Wu and Yaguan Qian and Yuanhong Tao},
  journal={Electronic Research Archive},
  volume={31},
  number={11},
  pages={7021},
  year={2023}
}

\newpage
\appendix

\section{Extended Related Work}\label{app:related}

\textbf{DP accounting and DP-SGD.}
DP-SGD~\citep{abadi2016_dpsgd} clips each example's gradient to a norm bound $C$, adds Gaussian noise calibrated to $C$, and tracks the privacy loss of the resulting subsampled Gaussian mechanism across steps.
Its analysis has been refined along two axes.
The per-step mechanism is understood through privacy amplification by subsampling~\citep{balle2018_subsampling,tan2023privacy} and tight R\'enyi bounds for the sampled Gaussian~\citep{mironov2019_sgm,wang2019_subsampledrdp}, within the R\'enyi-DP framework~\citep{mironov2017_rdp} and its relatives zCDP~\citep{bun2016_zcdp} and $f$-DP/GDP~\citep{dong2019_gdp}.
Composition across steps moved from generic bounds~\citep{kairouz2015_composition} to numerically exact ones: privacy-loss distributions~\citep{sommer2019_privacylossclasses} composed by FFT~\citep{koskela2020_fft,gopi2021_numerical}, with optimal discretizations in~\citet{doroshenko2022_connectdots}.
We use this numerical machinery as a \emph{verification} layer: every candidate plan, homogeneous or not, is composed exactly before it runs.
The line of work closest in motivation is adaptive clipping, which tunes the clip norm during training, per layer~\citep{mcmahan2018_recurrent}, by privately tracked quantiles~\citep{andrew2021_adaptiveclipping}, per coordinate~\citep{pichapati2019_adaclip}, or by normalization that removes the clip threshold altogether~\citep{bu2022_autoclip}; \citet{du2021_dynamicdp} schedule the mechanism dynamically across steps.
All of these react to \emph{private} signal, so the adaptation itself either consumes budget or complicates the accounting.
Our time-heterogeneous allocation makes the opposite choice: the entire plan $(C_k,\sigma_k,T_k)$ is computed from a public geometry prior and declared before training, so the accountant verifies a fixed workload and the plan costs nothing beyond it.

\textbf{DP generative models.}
The GAN generation attacked the problem through the discriminator: DP-GAN~\citep{xie2018_dpgan} noises discriminator gradients, PATE-GAN~\citep{jordon2019_pategan} replaces the discriminator with a private teacher ensemble, G-PATE~\citep{long2019_gpate} aggregates gradients from teacher discriminators, GS-WGAN~\citep{chen2020_gswgan} sanitizes only the gradients that reach the generator, and~\citet{bie2023_privategans} show that careful training recipes recover much of the lost utility.
A second family avoids iterative training altogether and releases a statistic once: DP-MERF~\citep{harder2021_dpmerf} a random-feature kernel mean embedding, DP-Sinkhorn~\citep{cao2021_dpsinkhorn} a debiased optimal-transport divergence; later members replace the random Fourier features by deterministic Hermite polynomial features (DP-HP~\citep{vinaroz2022_dphp}) or by the empirical neural-tangent-kernel features of a random network (DP-NTK~\citep{yang2023_dpntk}). A third line trains nothing at all: Private Evolution~\citep{lin2024_pe} evolves a synthetic population from a frozen foundation model; private data acts only through noisy nearest-neighbor vote histograms, and the entire budget is spent at sampling time. We run DP-MERF, DP-HP, DP-NTK, and Private Evolution under the paper's protocol (Tables~\ref{tab:main} and~\ref{tab:sd3}); Section~\ref{sec:quality} analyzes why the embedding family and the evolution mechanism fail on opposite metrics.
Stage~1 of StraightDP shares the one-shot-release form of the embedding family; what differs is \emph{which} statistic is released, the class-conditional moments that determine the analytic noise-end field of a rectified flow rather than a generator-agnostic kernel embedding, and \emph{where} it enters, as a distillation target inside a privately trained flow or as a sampling-time initialization of a frozen prior, so the same release also composes with gradient training instead of replacing it.
Our Stage~1 belongs to this family in mechanism, with one difference we consider decisive: the released statistic is not a generic embedding but the exact sufficient statistics of the flow's low-$t$ Bayes target, so every released coordinate is one the training objective provably consumes.
The diffusion generation trains score or velocity models under DP-SGD: DPDM~\citep{dockhorn2022_dpdm} introduces noise multiplicity, averaging the loss over several diffusion times per example to cut gradient variance at no privacy cost, which we adopt in Stage~2; \citet{ghalebikesabi2023_dpdiffusion} scale public-pretrain-then-DP-fine-tune to strong image quality; DP-LDM~\citep{lyu2023_dpldm} fine-tunes only the conditioning modules of a frozen latent diffusion model, which motivates our latent variant with a frozen public autoencoder.
Across all three families the privacy mechanism is uniform in diffusion time; the structural lever StraightDP turns, \emph{when} the budget is spent and on \emph{what} supervision, is unused.

\textbf{Rectified flows and multimodal DiTs.}
Rectified flow~\citep{liu2022_rectifiedflow} and flow matching~\citep{lipman2022_flowmatching} train continuous-time generative models by regressing a velocity field on straight (or affine Gaussian) interpolation paths, simulation-free; stochastic interpolants~\citep{albergo2022_interpolants} give the general framework, \citet{liu2022_rfot} the optimal-transport reading, and \citet{lee2024_improvingrf} training refinements.
Straightness matters to us beyond sampling speed: on the straight path the posterior of the clean sample given $x_t$ is available in closed form under a moment model, which is what makes the low-$t$ target an explicit functional of $(\pi_y,\mu_y,\Sigma)$ and hence releasable at $1/N$ sensitivity.
On the architecture side, DiT~\citep{peebles2022_dit} established transformer backbones for diffusion, and SD3's MM-DiT~\citep{esser2024_sd3} runs separate image and text token streams coupled by joint attention.
That separation is essential for us: per-stream norms and per-interface inflows are well defined objects to clamp.

\textbf{Constrained networks for privacy.}
Spectral normalization~\citep{miyato2018_spectralnorm} constrains per-layer operator norms for GAN stability; Lipschitz architectures~\citep{anil2019_lipschitz} make the global constant exact at a documented expressivity cost; and standard dot-product attention is not Lipschitz at all, which \citet{kim2021_lipschitzattn} repair with an $L2$ variant.
\citet{bethune2023_dpsgdclipless} connect this line to privacy: a globally Lipschitz network has bounded per-sample gradients, so DP-SGD can run without clipping.
We take the same lever, enforced norms, to a different place.
Instead of a global Lipschitz constant we bound \emph{activations} (per-token stream norms) everywhere, sidestepping the global expressivity cost; attention never needs a Lipschitz constant because the clamps bound its inputs and its inflow directly.
The empirical finding that the activation bound alone improves plain DP training, with a gain that grows with the noise, has no counterpart in that line.

\textbf{Public data and audits.}
That private learning at tight budgets needs public structure was argued early through features~\citep{tramer2021_features} and is now standard practice at scale~\citep{de2022_unlocking,yu2022_dpfinetune,mehta2022_transferdp,kurakin2022_imagenetdp}, with~\citet{ganesh2023_whypublic} giving a theoretical account of why pretraining is hard to avoid.
StraightDP follows this practice and pushes it one step further: beyond the weights, every calibration (clip references, bucket shapes, the distillation horizon, enforcement targets) comes from public data, so no private budget is spent on tuning.
Our audit battery instantiates the standard attack hierarchy: membership inference by shadow statistics~\citep{shokri2017_mia} and by loss thresholds~\citep{yeom2018_privacyrisk}, refined to likelihood-ratio form in~\citet{carlini2022_miafp}; canary extraction~\citep{carlini2019_secretsharer}; and quantitative audits that turn attack success into empirical $\epsdp$ lower bounds~\citep{jagielski2020_auditing,nasr2021_adversary}, in the one-run form of~\citet{steinke2023_onerun}.
Generation quality uses FID~\citep{heusel2017_fid} with the implementation cautions of~\citet{parmar2022_cleanfid}, and precision/recall in the improved form of~\citet{kynkaanniemi2019_pr}, descended from~\citet{sajjadi2018_pr} and paralleled by density/coverage~\citep{naeem2020_densitycoverage}; Sections~\ref{sec:quality} and~\ref{sec:pr} analyze why these marginal metrics and conditional utility can move in opposite directions.

\section{Proofs}\label{app:proofs}

\subsection{Conventions and standing facts}\label{app:conventions}

\textbf{Notation.}
$\|\cdot\|$ denotes the Euclidean norm on vectors; for matrices, $\|M\|:=\smax(M)=\sup_{\|v\|=1}\|Mv\|$ is the operator (spectral) norm and $\|M\|_F$ the Frobenius norm.
A ``token'' is one row of a stream tensor; per-token statements are uniform over tokens.
Primed quantities ($z'$, $e'$, $g'$, \dots) always refer to the neighboring dataset under the adjacency being analyzed; every input not explicitly primed is held fixed, including all shared randomness of the training step (minibatch selection, flow noise $\xi$, time $t$); sensitivity bounds must hold conditionally on this randomness, and all of ours do.
\textbf{Adjacency.} The paper accounts under add/remove adjacency (the native semantics of the subsampled-Gaussian PLD).
All sensitivity bounds in this appendix are proved for single-record \emph{replacement}, which dominates: a removal (or addition) is the special case of replacing a record by an empty contribution, so every replacement bound is also an add/remove bound; released averages use the public constant $N$ as denominator, so the neighboring dataset's size never enters a mechanism.
Every step uses enforced quantities or exact identities.

We isolate four elementary facts that the proofs use repeatedly, so that each later argument can cite them by name rather than re-deriving them inline.

\begin{fact}[Gaussian conditioning]\label{fact:gauss}
Let $(a,b)$ be jointly Gaussian with means $(\mu_a,\mu_b)$, $\mathrm{Cov}(b)=\Sigma_{bb}\succ0$ and $\mathrm{Cov}(a,b)=\Sigma_{ab}$.
Then $\E[a\mid b]=\mu_a+\Sigma_{ab}\Sigma_{bb}^{-1}(b-\mu_b)$.
\end{fact}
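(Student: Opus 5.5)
The plan is the standard residual (orthogonal-projection) argument, which needs no density formulas and so also covers a possibly singular $\mathrm{Cov}(a)$. Define the residual
\[
r \;:=\; a-\mu_a-\Sigma_{ab}\Sigma_{bb}^{-1}(b-\mu_b),
\]
which is well defined because $\Sigma_{bb}\succ0$. The goal is to show that $r$ has mean zero and is independent of $b$. Once that is in hand, the claim follows immediately from linearity of conditional expectation:
\[
\E[a\mid b]=\mu_a+\Sigma_{ab}\Sigma_{bb}^{-1}(b-\mu_b)+\E[r\mid b].
\]
Here $\E[r\mid b]=\E[r]=0$.

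The first step is to note that $(r,b)$ is an affine image of the jointly Gaussian vector $(a,b)$, so $(r,b)$ is itself jointly Gaussian. The second step is to compute the first two moments. The mean is $\E[r]=\mu_a-\mu_a-0=0$. The cross-covariance is
\[
\mathrm{Cov}(r,b)=\Sigma_{ab}-\Sigma_{ab}\Sigma_{bb}^{-1}\Sigma_{bb}=0.
\]
The third step passes from zero cross-covariance to independence. For a jointly Gaussian pair, the characteristic function factorizes when the off-diagonal covariance block vanishes:
\[
\E\exp\big(i\langle u,r\rangle+i\langle w,b\rangle\big)=\exp\Big(i\langle w,\mu_b\rangle-\tfrac12 u^\top\Sigma_{rr}u-\tfrac12 w^\top\Sigma_{bb}w\Big).
\]
This product form gives $r\perp b$, and the argument stays valid even when $\Sigma_{rr}$ is singular.

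The only step that needs care is this third one, because uncorrelatedness alone does not imply independence. It is joint Gaussianity of $(r,b)$, inherited through the affine map, that licenses the conclusion, and the characteristic-function route handles degenerate covariances without any invertibility beyond $\Sigma_{bb}$. In the application to Proposition~\ref{prop:analytic}, take $a=z$ and $b=x_t=(1-t)\xi+tz$. Then $\Sigma_{bb}=A_t\succ0$ for $t<1$ and $\Sigma_{ab}=t\Sigma$, so the hypotheses are met.
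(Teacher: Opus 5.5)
Your argument is correct and complete. There is no paper proof to compare it against step by step: Fact~\ref{fact:gauss} is the one standing fact stated without proof (the other three are proved), and it is invoked as a black box in Step~2 of Appendix~\ref{app:analytic}.

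Every step of yours checks out:
$(r,b)$ is an affine image of $(a,b)$;
$\mathrm{Cov}(r,b)=\Sigma_{ab}-\Sigma_{ab}\Sigma_{bb}^{-1}\Sigma_{bb}=0$;
the factorized characteristic function gives $r\perp b$ with no invertibility needed beyond $\Sigma_{bb}$;
pulling out the $b$-measurable part finishes.

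Compared with the textbook alternative (completing the square in the joint density, or a Schur complement of the joint covariance), your route buys three things the paper quietly relies on.

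First, it needs only $\Sigma_{bb}\succ0$, which is exactly what Step~1 verifies via $A_t\succeq(1-t)^2I$. Proposition~\ref{prop:analytic} does not assume $\Sigma\succ0$, and for singular $\Sigma$ the pair $(z,x_t)$ has no joint density, so the density-based derivation does not apply directly.

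Second, independence yields the full conditional law, $a\mid b=x\sim\N\big(\mu_a+\Sigma_{ab}\Sigma_{bb}^{-1}(x-\mu_b),\ \mathrm{Cov}(a)-\Sigma_{ab}\Sigma_{bb}^{-1}\Sigma_{ab}^\top\big)$, as a version valid for every $x$. This licenses the paper's pointwise evaluation of $\E[z\mid x_t=x,\,y]$, rather than an identity holding only almost surely.

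Third, with $a=z$ and $b=x_t$ it gives the class-conditional posterior covariance $\Sigma-t^2\Sigma A_t^{-1}\Sigma$. That is the natural ingredient of the planner's $\bar V(t)=\operatorname{tr}\mathrm{Cov}[z\mid x_t]$ in Appendix~\ref{app:neverworse}, which the paper uses without deriving.
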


\begin{fact}[Projections are nonexpansive]\label{fact:proj}
Let $\Omega\subset\R^d$ be closed and convex and $\Pi_\Omega$ the Euclidean projection onto $\Omega$.
Then $\|\Pi_\Omega(u)-\Pi_\Omega(v)\|\le\|u-v\|$ for all $u,v$.
In particular the per-token stream clamp $\Pi_{\Bo}(v)=v\cdot\min(1,\Bo/\|v\|)$, which is the projection onto the centered ball of radius $\Bo$, is $1$-Lipschitz.
\end{fact}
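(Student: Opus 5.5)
The statement to prove is Fact~\ref{fact:proj}. We give the standard variational-inequality argument for a general closed convex $\Omega$, then specialize it to the ball.

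\textbf{Characterize the projection.} First we fix what $\Pi_\Omega(u)$ means. Because $\Omega$ is closed, convex and nonempty, and $\|\cdot\|^2$ is strictly convex and coercive, the minimizer $p=\Pi_\Omega(u)$ of $\|u-w\|^2$ over $w\in\Omega$ exists and is unique. It is characterized by the obtuse-angle inequality
\[
\langle u-p,\ w-p\rangle\le0 \quad\text{for all } w\in\Omega.
\]
To derive it, take any $w\in\Omega$ and $s\in(0,1]$. By convexity $p+s(w-p)\in\Omega$, so minimality of $p$ gives $\|u-p\|^2\le\|u-p-s(w-p)\|^2$. Expanding, dividing by $s$, and letting $s\to0$ yields the inequality.

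\textbf{Derive nonexpansiveness.} Let $p=\Pi_\Omega(u)$ and $q=\Pi_\Omega(v)$. Apply the inequality at $u$ with $w=q$, and at $v$ with $w=p$:
\[
\langle u-p,\ q-p\rangle\le0, \qquad \langle v-q,\ p-q\rangle\le0.
\]
Adding the two gives $\langle (u-v)-(p-q),\ p-q\rangle\ge0$, that is, $\|p-q\|^2\le\langle u-v,\ p-q\rangle$. Cauchy--Schwarz bounds the right side by $\|u-v\|\,\|p-q\|$. If $p\neq q$, dividing by $\|p-q\|$ gives the claim; if $p=q$ it holds trivially. This is the only step with real content, and it is the one where convexity is indispensable.

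\textbf{Specialize to the ball.} Let $\Omega=\{w:\|w\|\le\Bo\}$, which is closed and convex. We check that $v\cdot\min(1,\Bo/\|v\|)$ is its projection. If $\|v\|\le\Bo$, then $v$ lies in the ball and is its own projection. If $\|v\|>\Bo$, set $p=\Bo v/\|v\|$. For any $w$ in the ball,
\[
\langle v-p,\ w-p\rangle=(1-\Bo/\|v\|)\,\bigl(\langle v,w\rangle-\Bo\|v\|\bigr)\le0,
\]
because $1-\Bo/\|v\|>0$ and $\langle v,w\rangle\le\|v\|\,\|w\|\le\Bo\|v\|$ by Cauchy--Schwarz. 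So $p$ satisfies the characterization and is the projection. (At $v=0$ the formula is read as $0$, which is already in the ball.) Nonexpansiveness from the previous step then gives $1$-Lipschitzness of the clamp.
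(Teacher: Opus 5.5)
Your proof is correct and follows the paper's route: apply the obtuse-angle inequality at both points, add the two inequalities, and finish with Cauchy--Schwarz, treating the clamp as the special case of the ball. You go further than the paper by deriving the variational inequality and by checking the ball formula through it, where the paper argues geometrically that the nearest point lies on the segment to $v$; identifying $p$ as the projection from the inequality uses its sufficiency direction, $\|v-w\|^2=\|v-p\|^2-2\langle v-p,w-p\rangle+\|w-p\|^2\ge\|v-p\|^2$, which you asserted but did not write out, although the $1$-Lipschitz conclusion only needs the inequality itself.
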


\begin{proof}
The projection onto a closed convex set satisfies the variational inequality $\langle u-\Pi_\Omega(u),\,w-\Pi_\Omega(u)\rangle\le0$ for all $w\in\Omega$.
Applying it with $w=\Pi_\Omega(v)$, and symmetrically with the roles of $u,v$ exchanged, and adding the two inequalities gives
$\|\Pi_\Omega(u)-\Pi_\Omega(v)\|^2\le\langle u-v,\,\Pi_\Omega(u)-\Pi_\Omega(v)\rangle$,
and Cauchy--Schwarz finishes.
That $\Pi_{\Bo}$ is this projection is immediate: for $\|v\|\le\Bo$ it is the identity, and for $\|v\|>\Bo$ the closest point of the ball lies on the segment from the origin to $v$ at radius $\Bo$.
\end{proof}

\begin{fact}[Operator-norm toolbox]\label{fact:opnorm}
For matrices of compatible shapes:
(i) $\|Mv\|\le\|M\|\,\|v\|$ and $\|MN\|\le\|M\|\,\|N\|$;
(ii) $\smax$ is $1$-Lipschitz with respect to the spectral norm: $|\smax(M)-\smax(N)|\le\|M-N\|$ (Weyl);
(iii) $\smax(cM)=|c|\,\smax(M)$ for $c\ge0$ (positive homogeneity);
(iv) $\smax$ is convex: $\smax(\lambda M+(1-\lambda)N)\le\lambda\,\smax(M)+(1-\lambda)\,\smax(N)$ for $\lambda\in[0,1]$.
\end{fact}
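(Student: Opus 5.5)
We prove the four items in order, each from the variational characterization $\smax(M)=\|M\|=\sup_{\|v\|=1}\|Mv\|$ together with the triangle inequality of the Euclidean norm. Nothing beyond these two facts is needed, so no earlier result is invoked.

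For (i), take any $v\neq0$. Then $\|Mv\|=\|v\|\,\|M(v/\|v\|)\|\le\|M\|\,\|v\|$, and the case $v=0$ is trivial. Applying this twice gives, for unit $v$,
\[
\|MNv\|\le\|M\|\,\|Nv\|\le\|M\|\,\|N\|.
\]
Taking the supremum over unit $v$ yields submultiplicativity.

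The common step for (ii)--(iv) is that $\|\cdot\|$ is a norm on matrices. Absolute homogeneity holds because $\|cMv\|=|c|\,\|Mv\|$ for every $v$, so taking the supremum gives (iii), and in fact for every real $c$. Subadditivity holds because $\|(M+N)v\|\le\|Mv\|+\|Nv\|\le\|M\|+\|N\|$ for unit $v$, and taking the supremum gives $\|M+N\|\le\|M\|+\|N\|$.

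Item (ii) is then the reverse triangle inequality. From $\|M\|\le\|M-N\|+\|N\|$ and its symmetric counterpart we get $|\smax(M)-\smax(N)|\le\|M-N\|$. Item (iv) follows from subadditivity followed by homogeneity with the nonnegative weights $\lambda$ and $1-\lambda$:
\[
\smax(\lambda M+(1-\lambda)N)\le\lambda\,\smax(M)+(1-\lambda)\,\smax(N).
\]

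There is no genuine obstacle. The only point needing care is that the operator norm coincides with the largest singular value, so that citing Weyl in (ii) is consistent. This holds because $\|Mv\|^2=v^\top M^\top Mv$, and its maximum over unit $v$ is the top eigenvalue of $M^\top M$, which is $\smax(M)^2$. Once this identity is in hand, all four items are pure norm axioms and need no spectral theory.
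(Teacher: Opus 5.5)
Your proof is correct, but it runs the dependencies in the opposite order from the paper. The paper proves convexity (iv) first, by writing $\smax(M)=\sup_{\|u\|=\|v\|=1}u^\top Mv$ as a pointwise supremum of functions linear in $M$. It then recovers the triangle inequality from convexity plus homogeneity, via $\smax(A+B)=2\,\smax(\tfrac12A+\tfrac12B)\le\smax(A)+\smax(B)$, and reads off (ii) from that. You instead prove subadditivity directly from the Euclidean triangle inequality applied to $(M+N)v$, and derive both (ii) and (iv) from it.

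Your route is the more elementary one. It needs only the definition $\sup_{\|v\|=1}\|Mv\|$, whereas the bilinear representation implicitly relies on the equality case of Cauchy--Schwarz ($u=Mv/\|Mv\|$) to agree with that definition. Because you establish homogeneity for every real $c$, you also cover the sign flip $\|N-M\|=\|M-N\|$ that the two-sided bound in (ii) needs. The paper's (iii), stated only for $c\ge0$, leaves that step implicit.

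The paper's route, in exchange, makes convexity a one-line structural fact: $\smax$ is a supremum of linear functionals of $M$. That view would also hand you subgradients $u_1v_1^\top$ at a top singular pair.

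Your closing check that the operator norm equals the top singular value is harmless but unnecessary. The paper's notation already defines $\|M\|:=\smax(M)=\sup_{\|v\|=1}\|Mv\|$.
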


\begin{proof}
(i) and (iii) are immediate from the definition as a supremum.
For (iv), write $\smax(M)=\sup_{\|u\|=\|v\|=1}u^\top Mv$: a pointwise supremum of functions that are \emph{linear} in $M$, hence convex.
(ii) follows from (iv) and homogeneity via the triangle inequality $\smax(M)\le\smax(N)+\smax(M-N)$ applied in both directions.
\end{proof}

\begin{fact}[Row-stochastic averaging does not amplify norms]\label{fact:rowstoch}
Let $A\in\R^{n\times m}$ have nonnegative entries with $\sum_jA_{ij}=1$ for every row $i$, and let $v_1,\dots,v_m$ be vectors.
Then $\big\|\sum_jA_{ij}v_j\big\|\le\max_j\|v_j\|$ for every $i$.
\end{fact}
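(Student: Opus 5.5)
Fix a row $i$ and write $u_i=\sum_j A_{ij}v_j$. The weights $A_{i1},\dots,A_{im}$ are nonnegative and sum to one, so $u_i$ is a convex combination of $v_1,\dots,v_m$. The plan is simply to apply the triangle inequality to this combination and then use the normalization of the row.

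First, the triangle inequality together with absolute homogeneity of the norm gives $\|u_i\|\le\sum_j\|A_{ij}v_j\|=\sum_j|A_{ij}|\,\|v_j\|$. Second, nonnegativity of the entries lets us drop the absolute values, so this equals $\sum_jA_{ij}\|v_j\|$. Third, each $\|v_j\|$ is at most $\max_k\|v_k\|$, and with nonnegative weights we get $\sum_jA_{ij}\|v_j\|\le\big(\sum_jA_{ij}\big)\max_k\|v_k\|=\max_k\|v_k\|$, where the last step is row-stochasticity. The maximum exists because $m$ is finite.

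An equivalent route is to note that the closed ball of radius $\max_k\|v_k\|$ is convex and contains every $v_j$, so it also contains their convex combination $u_i$.

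There is no genuine obstacle. The only points needing care are that nonnegativity is used to remove the absolute values, and that the sum-to-one condition is exactly what prevents amplification. Without nonnegativity, a row such as $(2,-1)$ would violate the bound. This is the form in which the fact will be applied to softmax attention rows acting on clamped value tokens.
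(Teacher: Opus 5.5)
Your proof is correct and is essentially the paper's own argument: the paper's direct route is the same triangle-inequality chain $\|\sum_j A_{ij}v_j\|\le\sum_j A_{ij}\|v_j\|\le\max_j\|v_j\|$, and it also gives a convex-hull argument that matches your alternative via the convex ball of radius $\max_k\|v_k\|$. Your counterexample row $(2,-1)$ is also correct: it shows that nonnegativity cannot be dropped.
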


\begin{proof}
$\sum_jA_{ij}v_j$ is a convex combination of the $v_j$, hence lies in their convex hull; the norm, being convex, attains its maximum over a convex hull at an extreme point, i.e., at one of the $v_j$.
(Directly: $\|\sum_jA_{ij}v_j\|\le\sum_jA_{ij}\|v_j\|\le\max_j\|v_j\|\sum_jA_{ij}=\max_j\|v_j\|$.)
Softmax attention rows are exactly of this form.
\end{proof}

\subsection{Proof of Proposition~\ref{prop:analytic} (analytic field)}\label{app:analytic}

We prove the assertions in turn: the conditional closed form \eqref{eq:analytic}, its mixture average \eqref{eq:mixture}, the velocity identity \eqref{eq:vstar}, and the noise-end limits.

\textbf{Step 1 (single Gaussian: the joint law of $(z,x_t)$).}
Fix the class $y$ and let $z\sim\N(\mu_y,\Sigma)$ and $\xi\sim\N(0,I_d)$ be independent, $x_t=(1-t)\xi+tz$.
The pair $(z,x_t)$ is the image of the jointly Gaussian vector $(z,\xi)$ under a fixed linear map, hence itself jointly Gaussian; it therefore suffices to compute first and second moments.
By linearity of expectation, $\E[x_t]=(1-t)\E[\xi]+t\E[z]=t\mu_y$.
By bilinearity of covariance and independence of $z$ and $\xi$ (so that every cross term $\mathrm{Cov}(z,\xi)$ vanishes),
\begin{align*}
\mathrm{Cov}(x_t)
&=\mathrm{Cov}\big((1-t)\xi\big)+\mathrm{Cov}\big(tz\big)\\
&\quad+\underbrace{(1-t)t\,\big[\mathrm{Cov}(\xi,z)+\mathrm{Cov}(z,\xi)\big]}_{=\,0}\\
&=(1-t)^2 I+t^2\Sigma \;=:\; A_t,\\[1mm]
\mathrm{Cov}(z,x_t)
&=\mathrm{Cov}\big(z,(1-t)\xi\big)+\mathrm{Cov}\big(z,tz\big)\\
&=0+t\,\Sigma\;=\;t\Sigma .
\end{align*}
For every $t\in[0,1)$ we have $(1-t)^2>0$, so $A_t\succeq(1-t)^2I\succ0$: $A_t$ is invertible wherever the formula below uses it.

\textbf{Step 2 (single Gaussian: conditional mean).}
Fact~\ref{fact:gauss} applied to $(a,b)=(z,x_t)$ gives
\begin{equation}
\label{eq:condmean}
\E[z\mid x_t=x,\,y]\;=\;\mu_y+t\Sigma A_t^{-1}(x-t\mu_y),
\end{equation}
which is \eqref{eq:analytic}.

\textbf{Step 3 (mixture: posterior responsibilities).}
Now let $y\sim\pi$ and $z\mid y\sim\N(\mu_y,\Sigma)$.
Conditionally on $y$, Step 1 shows $x_t\mid y\sim\N(t\mu_y,A_t)$ (note that $A_t$ does not depend on $y$ because $\Sigma$ is shared; this is the benefit of the shared covariance).
Bayes' rule for densities gives the posterior class weights
\[
w_y(x,t)\;:=\;\Pr[y\mid x_t=x]
\;=\;\frac{\pi_y\,\N(x;\,t\mu_y,\,A_t)}{\sum_{y'}\pi_{y'}\,\N(x;\,t\mu_{y'},\,A_t)} ,
\]
and the tower property of conditional expectation combines them with \eqref{eq:condmean}:
\begin{align*}
\E[z\mid x_t=x]
&=\sum_y \Pr[y\mid x_t=x]\;\E[z\mid x_t=x,\,y]\\
&=\sum_y w_y(x,t)\Big(\mu_y+t\Sigma A_t^{-1}(x-t\mu_y)\Big),
\end{align*}
which is exactly \eqref{eq:mixture}.

\textbf{Step 4 (from conditional mean to velocity).}
On the event $\{x_t=x\}$, the interpolation identity can be solved for the noise: $\xi=(x-tz)/(1-t)$ holds pointwise (deterministically given $z$ and $x$).
Substituting,
\[
z-\xi
\;=\;z-\frac{x-tz}{1-t}
\;=\;\frac{(1-t)z-x+tz}{1-t}
\;=\;\frac{z-x}{1-t}.
\]
Both sides are integrable, so taking $\E[\,\cdot\mid x_t=x]$ and using linearity yields
\[
v^\star(x,t)=\E[z-\xi\mid x_t=x]=\frac{\E[z\mid x_t=x]-x}{1-t},
\]
which is \eqref{eq:vstar}; conditioning on $y$ throughout gives the same identity for the conditional field.

\textbf{Step 5 (noise-end limit).}
Fix a compact set of $x$ and let $t\to0$.
First, $A_t-I=(t^2-2t)I+t^2\Sigma$, so $\|A_t-I\|\le 2t+t^2(1+\|\Sigma\|)$; hence $A_t\to I$ and, by the Neumann series, $A_t^{-1}=I+O(t)$ uniformly.
Hence the correction term in \eqref{eq:condmean} satisfies $\|t\Sigma A_t^{-1}(x-t\mu_y)\|=O(t)$ uniformly on compacts.
Second, the exponent of $\N(x;t\mu_y,A_t)$ depends on $y$ only through the term $t\mu_y$, so the ratio of any two class densities is $1+O(t)$, and therefore $w_y(x,t)=\pi_y+O(t)$.
Substituting both expansions,
\begin{align*}
\E[z\mid x_t=x]&=\sum_y\pi_y\mu_y+O(t),\\
v^\star(x,t)&=\sum_y\pi_y\mu_y-x+O(t):
\end{align*}
to leading order the target depends on the data \emph{only} through $(\pi_y,\mu_y,\Sigma)$, and the class structure enters at order $t$ through the responsibilities.
\qed

\begin{remark}[Relation to the empirical kernel field]
Taking the degenerate limit $\Sigma\to h^2I$ with one mixture component per training point ($\mu_i=z_i$, $\pi_i=1/N$) turns \eqref{eq:mixture} into the empirical kernel field with Gaussian weights $\propto\exp\big(-\|x-tz_i\|^2/(2((1-t)^2+t^2h^2))\big)$ at bandwidth $h$: Proposition~\ref{prop:analytic} and the exact empirical target are two points on one family, which is why Figure~\ref{fig:timegeo} (left) can compare them directly.
\end{remark}

\begin{lemma}[Noise-end expansion for a general law]\label{lem:smallt}
Let the class-conditional law of $z$ given $y$ be \emph{any} distribution supported in $\{\|z\|\le R\}$ with mean $\mu_y$ and covariance $\Sigma_y$.
Then, uniformly over $\|x\|$ bounded and $t\le\tfrac12$,
\begin{gather*}
\E[z\mid x_t{=}x,\,y]=\mu_y+t\,\Sigma_y x+O(t^2),\\
v^\star(x,t,y)=(\mu_y-x)+t\,(\mu_y-x+\Sigma_y x)+O(t^2).
\end{gather*}
\end{lemma}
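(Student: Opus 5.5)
The plan is to work directly with the posterior of $z$ given $x_t=x$ (and $y$), written as an exponential tilt of the class-conditional law. No density for $z$ is needed: the likelihood of $x$ given $z$ is the Gaussian $\N(x;\,tz,\,(1-t)^2I)$, which is strictly positive for $t<1$. So by Bayes' rule the posterior is the prior law $P_y$ reweighted by
\[
\exp\!\Big(-\tfrac{\|x-tz\|^2}{2(1-t)^2}\Big)\;\propto\;\exp\big(\phi_t(z)\big),\qquad
\phi_t(z)=\frac{t\langle x,z\rangle}{(1-t)^2}-\frac{t^2\|z\|^2}{2(1-t)^2}.
\]
The $z$-independent factor $\exp(-\|x\|^2/(2(1-t)^2))$ cancels in the ratio. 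Hence
\[
\E[z\mid x_t=x,y]=\frac{\E_{P_y}[z\,e^{\phi_t(z)}]}{\E_{P_y}[e^{\phi_t(z)}]} .
\]

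The first step is the uniform expansion of the tilt. Fix $\|x\|\le K$, $\|z\|\le R$ and $t\le\tfrac12$. Since $(1-t)^{-2}=1+O(t)$ on this range, we can write $\phi_t(z)=t\langle x,z\rangle+t^2\rho_t(z)$, where $|\rho_t(z)|\le c(K,R)$. In particular $|\phi_t(z)|\le c'\,t$. The elementary bound $|e^u-1-u|\le\tfrac12u^2e^{|u|}$ then gives
\[
e^{\phi_t(z)}=1+t\langle x,z\rangle+r_t(z),\qquad |r_t(z)|\le C t^2 ,
\]
with $C$ depending only on $(K,R)$. This holds uniformly in $z$ on the support and in $x$ on the ball.

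The second step integrates against $P_y$. Using $\|z\|\le R$, the numerator is $\mu_y+t\,\E[zz^\top]x+O(t^2)$. The denominator is $1+t\langle x,\mu_y\rangle+O(t^2)$, and it is bounded below by $e^{-c't}\ge e^{-c'/2}>0$. That lower bound lets me expand the reciprocal as $1-t\langle x,\mu_y\rangle+O(t^2)$ with a uniform constant. Multiplying out gives
\[
\mu_y+t\big(\E[zz^\top]-\mu_y\mu_y^\top\big)x+O(t^2)=\mu_y+t\,\Sigma_yx+O(t^2),
\]
which is the first display. For the velocity I apply \eqref{eq:vstar}, conditionally on $y$, as established in Step~4 of the proof of Proposition~\ref{prop:analytic}. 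With $1/(1-t)=1+t+O(t^2)$ on $t\le\tfrac12$, this yields
\[
(\mu_y-x+t\Sigma_yx)(1+t)+O(t^2)=(\mu_y-x)+t(\mu_y-x+\Sigma_yx)+O(t^2).
\]
The cross term $t^2\Sigma_yx$ is absorbed into the remainder because $x$ is bounded.

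I expect the main obstacle to be bookkeeping rather than any conceptual difficulty. Every $O(t^2)$ must be shown uniform, both over the support of $P_y$ and over $\|x\|\le K$. The denominator also needs a lower bound before its reciprocal can be expanded. Both are handled by the compact-support assumption: it keeps $|\phi_t|$ linearly small and keeps all moments bounded by powers of $R$. Boundedness of $x$ is what prevents the tilt from degenerating.
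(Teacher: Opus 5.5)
Your proposal is correct and follows the paper's proof essentially step for step. Both write the posterior as an exponential tilt of the class-conditional law by the Gaussian interpolation kernel, use the uniform expansion $e^{\phi_t(z)}=1+t\langle x,z\rangle+O(t^2)$ on the bounded support, expand the ratio of expectations to obtain $\E[zz^\top]-\mu_y\mu_y^\top=\Sigma_y$, and finish with the velocity identity and $(1-t)^{-1}=1+t+O(t^2)$. Your explicit lower bound on the denominator before inverting it is left implicit in the paper, and it is what makes the reciprocal expansion uniform in $x$ and $t$.
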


\begin{proof}
By Bayes' rule the posterior of $z$ given $x_t=x$ reweights the prior by the interpolation kernel:
dropping $z$-free factors of $\N(x;tz,(1-t)^2I)$,
\[
w_t(z)=\exp\!\Big(\frac{2t\langle x,z\rangle-t^2\|z\|^2}{2(1-t)^2}\Big)
=\exp\big(t\langle x,z\rangle+r_t(z)\big),
\]
where, on $\|z\|\le R$, $\|x\|\le X$, $t\le\tfrac12$, the remainder obeys $|r_t(z)|\le C(R,X)\,t^2$ (the $(1-t)^{-2}$ expansion and the $t^2\|z\|^2$ term are both $O(t^2)$ with bounded coefficients).
Hence $w_t(z)=1+t\langle x,z\rangle+O(t^2)$ uniformly, and with $\E$ the class-conditional expectation,
\begin{align*}
\E[z\mid x_t{=}x,y]&=\frac{\E[z\,w_t(z)]}{\E[w_t(z)]}\\
&=\big(\mu_y+t\,\E[z\langle x,z\rangle]\big)\big(1-t\langle x,\mu_y\rangle\big)+O(t^2)\\
&=\mu_y+t\big(\E[zz^\top]-\mu_y\mu_y^\top\big)x+O(t^2),
\end{align*}
which is $\mu_y+t\Sigma_y x+O(t^2)$.
The field form follows from \eqref{eq:vstar}: with $(1-t)^{-1}=1+t+O(t^2)$,
\begin{align*}
v^\star(x,t,y)&=\big(\mu_y-x+t\Sigma_y x+O(t^2)\big)\big(1+t+O(t^2)\big)\\
&=(\mu_y-x)+t\,(\mu_y-x+\Sigma_y x)+O(t^2).
\end{align*}
The Gaussian working model \eqref{eq:analytic} expands to $\mu_y+t\Sigma x+O(t^2)$, so it is first-order exact for every law whose class covariance equals the shared $\Sigma$; class-dependent covariances enter first at order $t$ through $\Sigma_y-\Sigma$.
Moment dominance at the noise end is therefore a property of the interpolation, not of the Gaussian approximation.
\end{proof}

\begin{proposition}[Field-error propagation over the horizon]\label{prop:groenwall}
Let $x$ solve $\dot x=\bar v(x,t)$ and $x^\star$ solve $\dot x^\star=v^\star(x^\star,t)$ from the same initial point on $[0,\tau]$, where $v^\star(\cdot,t)$ is $L$-Lipschitz on a region containing both trajectories and $\|\bar v(x,t)-v^\star(x,t)\|\le\delta(t)$ there.
Then
\[
\|x(\tau)-x^\star(\tau)\|
\;\le\;\int_0^\tau e^{L(\tau-s)}\,\delta(s)\,ds
\;\le\;\frac{e^{L\tau}-1}{L}\,\sup_{s\le\tau}\delta(s),
\]
and coupling the two flows by a shared initial draw bounds the Wasserstein-2 distance of the time-$\tau$ marginals by the same quantity.
\end{proposition}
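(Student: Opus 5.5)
The plan is the standard Grönwall comparison argument, followed by a synchronous coupling for the Wasserstein statement.

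\textbf{Error equation.} Define the discrepancy $e(t)=x(t)-x^\star(t)$, so $e(0)=0$. Split the velocity difference as
\[
\dot e=\big(\bar v(x,t)-v^\star(x,t)\big)+\big(v^\star(x,t)-v^\star(x^\star,t)\big).
\]
The first bracket is at most $\delta(t)$ in norm, by the hypothesis on the region containing both trajectories. The second is at most $L\|e(t)\|$, by the Lipschitz hypothesis on $v^\star(\cdot,t)$. Hence, in integral form,
\[
\|e(t)\|\le\int_0^t\big(\delta(s)+L\|e(s)\|\big)\,ds .
\]

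\textbf{Grönwall step.} Working with the integral inequality avoids differentiating $\|e\|$ where it vanishes. Set $u(t)=\int_0^t\|e(s)\|\,ds$. Then $u'\le \int_0^t\delta+Lu$. Multiplying by $e^{-Lt}$ and integrating gives
\[
u(t)\le\int_0^t e^{L(t-r)}\Delta(r)\,dr,\qquad \Delta(r)=\int_0^r\delta(s)\,ds .
\]
Substituting this back into the inequality for $\|e(t)\|$ and exchanging the order of integration yields
\[
\|e(\tau)\|\le\int_0^\tau e^{L(\tau-s)}\delta(s)\,ds .
\]
An equivalent route is to note that $\phi(t)=\int_0^t e^{L(t-s)}\delta(s)\,ds$ solves $\phi'=\delta+L\phi$ with $\phi(0)=0$, and to apply the comparison principle.

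The second inequality follows by bounding $\delta(s)\le\sup_{s\le\tau}\delta(s)$ and using
\[
\int_0^\tau e^{L(\tau-s)}\,ds=\frac{e^{L\tau}-1}{L}.
\]

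\textbf{Wasserstein bound.} Draw one initial point $x_0$ from the common initial law and run both flows from it. This gives a coupling of the two time-$\tau$ marginals. The pathwise bound holds for every $x_0$ whose trajectories stay in the region, so
\[
W_2^2\le \E\|x(\tau)-x^\star(\tau)\|^2\le\Big(\int_0^\tau e^{L(\tau-s)}\delta(s)\,ds\Big)^2 .
\]

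\textbf{Main obstacle.} The delicate point is the standing assumption that both trajectories remain in the region where the Lipschitz and $\delta$ bounds hold. For the pathwise statement this is hypothesized. For the Wasserstein statement it must hold for almost every initial draw, and the Gaussian initial law has unbounded support. I would state it explicitly as a requirement on the support of the coupling. The Gaussian-tail argument mentioned earlier in the paper can then handle the complement, at the cost of an additive tail term.

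A secondary point is measurability and absolute continuity of $\|e\|$, needed for the integral form. This holds because both flows are solutions of ODEs with locally Lipschitz fields, so $e$ is absolutely continuous.
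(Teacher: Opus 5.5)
Your proposal is correct and follows the paper's proof. Both use the same split of the velocity difference into a $\delta(t)$ term and an $L\|e\|$ term, then Gr\"onwall, the elementary integral $\int_0^\tau e^{L(\tau-s)}\,ds=(e^{L\tau}-1)/L$, and the shared-draw coupling for $W_2$. Your integral-form Gr\"onwall avoids differentiating $\|e\|$ where it vanishes, which the paper's one-line $\dot e\le\delta+Le$ glosses over, and your caveat that the region hypothesis must hold for almost every initial draw is a sound refinement. One small correction: absolute continuity of $e$ holds because ODE solutions are integrals of their velocity fields, not because $\bar v$ is locally Lipschitz, which is not assumed.
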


\begin{proof}
With $e(s)=\|x(s)-x^\star(s)\|$,
$\dot e\le\|\bar v(x,s)-v^\star(x,s)\|+\|v^\star(x,s)-v^\star(x^\star,s)\|\le\delta(s)+L\,e(s)$,
and Gr\"onwall's inequality gives the integral bound; the supremum form follows by monotonicity.
For the marginals, the shared-draw coupling realizes a transport plan whose cost is bounded by the pathwise bound.
\end{proof}

This is the quantity the public $\tau$ calibration controls: the knee criterion keeps the measured field discrepancy $\delta(t)$ small on $t\le\tau$, and at the calibrated $\tau\le0.35$ the exponential factor is benign, so a small measured $\delta$ certifies a small sampling-error contribution from the moment segment.
One gap is stated explicitly: the calibration measures $\delta$ on $\Dpub$, whereas the bound needs the discrepancy on the private domain, so carrying it over requires a transfer assumption of the form
$\sup_{t\le\tau}\|v^\star_{\mathrm{priv}}-\bar v_{\mathrm{priv}}\|\le\sup_{t\le\tau}\|v^\star_{\mathrm{pub}}-\bar v_{\mathrm{pub}}\|+\Delta_{\mathrm{shift}}$.
We do not bound $\Delta_{\mathrm{shift}}$; we support it empirically, by the stability of the selected knee across two disjoint public domains (KMNIST and Fashion-MNIST select the same $\tau$ in the latent variant; Section~\ref{app:hyper}), and structurally, by the fact that a mis-transferred $\tau$ wastes budget but never leaks: privacy is independent of every calibration.

\begin{remark}[Unbounded sampler inputs]\label{rem:unboundedx}
Lemma~\ref{lem:smallt} is uniform over $\|x\|\le X$, while the sampler's input $x_t=(1-t)\xi+tz$ has unbounded Gaussian $\xi$.
Since $\|x_t\|\le(1-t)\|\xi\|+tR$ and $\Pr[\|\xi\|>\sqrt d+u]\le e^{-u^2/2}$, the lemma applies along the sampling path with $X=\sqrt d+tR+u$ except on an event of probability $e^{-u^2/2}$; the constant $C(R,X)$ grows only linearly in $X$, so the expansion holds with high probability over the sampler's own randomness.
\end{remark}

\subsection{Sensitivity of the moment release}\label{app:a1sens}

We first recall the adjacency and the exact object being released, then prove the two sensitivity bounds separately.

\textbf{Setup.}
Datasets $\D=\{(z_i,y_i)\}_{i=1}^N$ and $\D'$ are \emph{replace-adjacent} if they agree except that one record $(z,y)\in\D$ is replaced by $(z',y')\in\D'$; in particular $|\D|=|\D'|=N$, and all records satisfy $\|z_i\|\le R$.
The Stage-1 mechanism releases (a) the whitened count/sum vector $u(\D)$ defined below with isotropic Gaussian noise, and (b) the average-form second moment with its own Gaussian noise; class-mean offsets are post-processed from (a) inside the public PCA basis.

\textbf{Add/remove adjacency on padded databases.}
The add/remove semantics of the main text reduces to the replacement semantics above.
Extend the record domain by a null record $\bot$ contributing $\phi(\bot)=0$ to every released statistic of the form $\tfrac1N\sum_i\phi(r_i)$, and represent a dataset as a capacity-$N$ tuple over the extended domain, where $N$ is the declared public constant, identical for all adjacent pairs.
Adding a record is the replacement $\bot\to r$ and removal is $r\to\bot$; either changes exactly one summand, by at most $\sup_r\|\phi(r)\|/N$, and leaves the normalization of every other term unchanged (the accountant never uses the realized dataset size).
The replacement bounds proved below allow arbitrary pairs $r\to r'$, so they dominate both directions, and the Stage-1 transcripts compose with the Poisson-subsampled DP-SGD transcript in this same add/remove semantics under one PLD accountant.

\begin{proposition}[Release sensitivities]\label{prop:a1sens}
Define the whitened statistic vector
\begin{gather*}
u(\D)\;=\;\Big(n_y,\ \tfrac{1}{2R}\,S_y\Big)_{y\in[Y]}\in\R^{Y(1+d)},\\
n_y=|\{i:y_i=y\}|,\qquad S_y=\textstyle\sum_{i:y_i=y}z_i .
\end{gather*}
Under replace adjacency,
$\sup_{\D\sim\D'}\|u(\D)-u(\D')\|_2=\sqrt{5/2}\le 2$,
where $2$ is the value declared to the accountant.
Moreover the second moment $M_2(\D)=\frac1N\sum_i z_iz_i^\top$ satisfies
$\sup_{\D\sim\D'}\|M_2(\D)-M_2(\D')\|_F\le 2R^2/N$.
\end{proposition}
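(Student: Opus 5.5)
The plan is a direct case analysis on the single replaced record $(z,y)\to(z',y')$. Every other record contributes identically to $u(\D)$ and $u(\D')$, so $u(\D)-u(\D')$ is supported on at most the two class blocks $y$ and $y'$. I would write the difference explicitly block by block and bound its squared $\ell_2$ norm. I would then exhibit a pair attaining the bound, which gives equality for the supremum rather than just an inequality.

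\emph{Case $y=y'$.} All counts are unchanged. Only $S_y$ moves, by $z'-z$, and $\|z'-z\|\le\|z'\|+\|z\|\le 2R$. After whitening by $1/(2R)$ this block moves by at most $1$, so $\|u(\D)-u(\D')\|\le 1$.

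\emph{Case $y\ne y'$.} The count $n_y$ drops by one and $n_{y'}$ rises by one, which contributes $1^2+1^2=2$ to the squared norm. The sum $S_y$ loses $z$ and $S_{y'}$ gains $z'$. These lie in different coordinates, so after whitening they contribute $\|z\|^2/(4R^2)+\|z'\|^2/(4R^2)\le \tfrac14+\tfrac14$. The total squared norm is therefore at most $5/2$. It is attained by any $z,z'$ with $\|z\|=\|z'\|=R$ placed in distinct classes, which requires $Y\ge2$. Taking the maximum over the two cases gives $\sup\|u(\D)-u(\D')\|_2=\sqrt{5/2}\approx1.58\le2$, so declaring $\Delta=2$ to the accountant is conservative. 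The main point needing care is this case: the whitening constant $2R$ was chosen for the same-class worst case. I need to check that it is not the cross-class sums but the two unit count moves that dominate, and that the two sum displacements add in orthogonal blocks rather than through a triangle inequality.

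\emph{Second moment.} For $M_2$, only one summand changes, so $M_2(\D)-M_2(\D')=\frac1N(zz^\top-z'z'^\top)$. The triangle inequality together with $\|zz^\top\|_F=\|z\|^2\le R^2$ gives the Frobenius bound $2R^2/N$. Here $N$ is the fixed public constant, so no dependence on the realized dataset size enters. Finally, as set up on padded databases just before the statement, add/remove is the special case of replacement with the null record $\bot$, whose contribution is $0$. Both bounds therefore transfer to add/remove adjacency.
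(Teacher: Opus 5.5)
Your proposal is correct and follows the paper's proof essentially step for step. It uses the same two-case split on whether the label changes and the same observation that the four moved sub-blocks $(n_y, S_y/2R, n_{y'}, S_{y'}/2R)$ occupy disjoint coordinates, so their squared changes add to $1+1+\tfrac14+\tfrac14$; it uses the same rank-one Frobenius bound for $M_2$. Your attainment condition is slightly sharper than the paper's: because $S_y$ and $S_{y'}$ sit in different blocks, $z$ and $z'$ need not be orthogonal, and only $\|z\|=\|z'\|=R$ with $Y\ge2$ is required.
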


\begin{proof}
Write $u=u(\D)$, $u'=u(\D')$; only the blocks of the affected classes can differ.

\emph{Case 1: label unchanged ($y'=y$).}
Then $n$-coordinates are all unchanged, and only class $y$'s whitened-sum block moves:
\[
\|u-u'\|_2
=\frac{1}{2R}\|z'-z\|
\;\le\;\frac{\|z'\|+\|z\|}{2R}
\;\le\;\frac{R+R}{2R}
\;=\;1 .
\]

\emph{Case 2: label changed ($y'\ne y$).}
Exactly the four sub-blocks $(n_y,\,S_y/2R,\,n_{y'},\,S_{y'}/2R)$ move, and they occupy \emph{disjoint} coordinates of $u$, so their squared changes add:
\begin{align*}
n_y&:\ \ n_y\mapsto n_y-1, &&\text{change }1;\\
n_{y'}&:\ \ n_{y'}\mapsto n_{y'}+1, &&\text{change }1;\\
S_y/2R&:\ \ \text{loses } z/2R, &&\text{change }\le \|z\|/2R\le\tfrac12;\\
S_{y'}/2R&:\ \ \text{gains } z'/2R, &&\text{change }\le \|z'\|/2R\le\tfrac12 .
\end{align*}
Therefore
\[
\|u-u'\|_2^2\;\le\;1^2+1^2+\big(\tfrac12\big)^2+\big(\tfrac12\big)^2\;=\;\tfrac52 ,
\]
with equality attained by $\|z\|=\|z'\|=R$ and orthogonal directions, so the exact worst case over both cases is $\sqrt{5/2}\approx1.58$.
Declaring $\Delta=2$ to the accountant is therefore sound (and mildly conservative; we keep the round constant for robustness to implementation drift).

\emph{Second moment.}
The shared terms cancel:
\begin{gather*}
M_2(\D)-M_2(\D')=\frac1N\big(zz^\top-z'z'^{\top}\big),\\
\|zz^\top-z'z'^{\top}\|_F\le\|z\|^2+\|z'\|^2\le2R^2,
\end{gather*}
using $\|vv^\top\|_F=\|v\|^2$ for rank-one matrices.
Dividing by $N$ gives the claim; this is the $\propto1/N$ decay the Stage-1 release exploits.

\emph{Post-processing steps.}
Two later operations act on the released statistics and cannot increase sensitivity:
projecting mean offsets onto the public PCA basis $P\in\R^{p\times d}$ satisfies $\|Pv\|\le\|v\|$ because $P$ has orthonormal rows ($PP^\top=I_p$, so $\|Pv\|^2=v^\top P^\top P v\le\|v\|^2$ since $P^\top P$ is an orthogonal projector); and the feature clip $\|\cdot\|\le R_{\mathrm{feat}}$ is a projection onto a ball, nonexpansive by Fact~\ref{fact:proj}.
Both are functions of released (already-noised) quantities and public matrices, hence pure post-processing for the accountant.
\end{proof}

\subsection{Cross-moment release: sensitivity and class equivalence}\label{app:crossmom}

\begin{proposition}\label{prop:crossmom}
Let $u(\D)=\big(\tfrac{1}{RR_{\mathrm{txt}}}M_{zc},\ \tfrac{1}{R_{\mathrm{txt}}^{2}}M_{cc}\big)$ with $M_{zc}=\tfrac1N\sum_i z_ie_i^\top$, $M_{cc}=\tfrac1N\sum_i e_ie_i^\top$, $\|z_i\|\le R$, $\|e_i\|\le R_{\mathrm{txt}}$.
Then, under replace adjacency (and a fortiori under image-only and caption-only adjacency),
$\sup_{\D\sim\D'}\|u(\D)-u(\D')\|_2\le 2\sqrt2/N$.
Moreover, with one-hot embeddings $e_i=\mathrm{onehot}(y_i)$ and ridge $\lambda\to0$, the regression map $W=M_{zc}(M_{cc}+\lambda I)^{-1}$ satisfies $W_{:,y}=\bar z_y$, the empirical mean of class $y$.
\end{proposition}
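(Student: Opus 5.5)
The plan mirrors the proof of Proposition~\ref{prop:a1sens}. Only the summand of the replaced record survives in the difference, and the Frobenius norm of a rank-one matrix is the product of the vector norms. The class-equivalence claim reduces to inverting a diagonal matrix.

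\emph{Sensitivity.} Fix replace-adjacent $\D,\D'$ differing in the record $(z,e)\mapsto(z',e')$. Since the normalization uses the public constant $N$, all shared terms cancel. This gives
\[
u_{zc}(\D)-u_{zc}(\D')=\frac{1}{N R R_{\mathrm{txt}}}\big(ze^\top-z'e'^\top\big),\qquad
u_{cc}(\D)-u_{cc}(\D')=\frac{1}{N R_{\mathrm{txt}}^2}\big(ee^\top-e'e'^\top\big).
\]
By the triangle inequality and $\|ab^\top\|_F=\|a\|\|b\|$, the first block satisfies $\|ze^\top-z'e'^\top\|_F\le\|z\|\|e\|+\|z'\|\|e'\|\le 2RR_{\mathrm{txt}}$, so it moves by at most $2/N$. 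The same argument bounds the second block by $2/N$. The two blocks occupy disjoint coordinates of $u$, so their squared changes add, and the joint $\ell_2$ (Frobenius) change is at most $\sqrt{(2/N)^2+(2/N)^2}=2\sqrt2/N$.

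\emph{Other adjacencies.} Image-only and caption-only adjacency are special cases of replacement ($e'=e$ or $z'=z$), so the same bound covers them a fortiori. Add/remove adjacency follows from the padded-database reduction of Appendix~\ref{app:a1sens}: the null record contributes the zero matrix, so one summand changes and the bound still applies.

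\emph{Class equivalence.} Take one-hot embeddings $e_i=\mathrm{onehot}(y_i)$. Then $e_ie_i^\top$ is the diagonal unit matrix at $(y_i,y_i)$, so $M_{cc}=\mathrm{diag}(n_y/N)$. Likewise $z_ie_i^\top$ places $z_i$ in column $y_i$, so the $y$-th column of $M_{zc}$ is $S_y/N$. The ridge inverse is then diagonal with entries $(n_y/N+\lambda)^{-1}$, which gives
\[
W_{:,y}=\frac{S_y/N}{n_y/N+\lambda}\;\longrightarrow\;\frac{S_y}{n_y}=\bar z_y\quad(\lambda\to0),
\]
for every class with $n_y>0$.

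\emph{Main obstacle.} The only delicate point is the class-equivalence statement when $n_y=0$. In that case $M_{cc}$ is singular and the limit gives $W_{:,y}=0$ rather than a defined class mean, so the statement must be read on nonempty classes. Empty classes are exactly the ones handled by the fallback to the global mean described in the main text.
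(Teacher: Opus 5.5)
Your proposal is correct and matches the paper's proof step for step. Rank-one Frobenius norms and the triangle inequality bound each whitened block by $2/N$, and disjoint coordinates give $2\sqrt2/N$; one-hot features make $M_{cc}=\mathrm{diag}(n_y/N)$, so that $W_{:,y}=(S_y/N)/(n_y/N+\lambda)\to\bar z_y$.

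Your caveat that the class-equivalence claim needs $n_y>0$ (for an empty class the limit is $0$) is a fair refinement that the paper leaves implicit by writing $\hat\pi_y\bar z_y$. However, the main-text fallback you cite is triggered by small \emph{noisy} counts, so it does not apply to exactly the set of empty classes.
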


\begin{proof}
\emph{Sensitivity.}
Replacing $(z,e)$ by $(z',e')$ changes the first block by
\[
\frac{\|ze^\top-z'e'^{\top}\|_F}{N\,RR_{\mathrm{txt}}}
\;\le\;\frac{\|z\|\|e\|+\|z'\|\|e'\|}{N\,RR_{\mathrm{txt}}}
\;\le\;\frac{2}{N},
\]
using $\|ab^\top\|_F=\|a\|\,\|b\|$ for rank-one matrices and the triangle inequality; the second block is bounded by $2/N$ identically with $z$ replaced by $e$.
The two blocks occupy disjoint coordinates, so the joint change is at most $\sqrt{(2/N)^2+(2/N)^2}=2\sqrt2/N$.
Image-only ($e'=e$) and caption-only ($z'=z$) replacements are special cases of the pair replacement, so the same bound holds under either single-field replacement.

\emph{Class equivalence.}
With one-hot embeddings, $M_{cc}=\mathrm{diag}(\hat\pi)$ with $\hat\pi_y=n_y/N$ and $M_{zc}=[\hat\pi_1\bar z_1,\dots,\hat\pi_Y\bar z_Y]$ (column $y$ sums the class-$y$ images divided by $N$).
Hence $W_{:,y}=\hat\pi_y\bar z_y/(\hat\pi_y+\lambda)\to\bar z_y$ as $\lambda\to0$: the text-conditional release reduces exactly to the per-class means, and Proposition~\ref{prop:analytic} applies with $\mu(c)$ in place of $\mu_y$ because the Gaussian-conditioning argument of Section~\ref{app:analytic} never uses more than the conditional mean.
\end{proof}

\subsection{The THA planner and Remark~\ref{prop:neverworse}}\label{app:neverworse}
This section specifies the allocation planner referenced in the main text: the shape family $C_\beta(t)\propto\big(w(t)\sqrt{\bar V(t)}/(1-t)\big)^{\beta}$ built from the loss weighting $w(t)$ and the posterior-variance trace $\bar V(t)=\operatorname{tr}\mathrm{Cov}[z\mid x_t]$ of the released analytic field, with $\beta\in\{0,\tfrac14,\tfrac12,\tfrac34,1\}$ and $\beta{=}0$ the uniform plan.

We first formalize the planner, then verify its two guarantees; both are immediate once stated precisely: the guarantee holds by construction rather than by any property of the prior.

\textbf{The planner as an algorithm.}
Fix the budget $(\epsdp_{\mathrm{budget}},\deltadp)$, the bucket structure, and the public prior functions $w(\cdot),\bar V(\cdot)$.
For each $\beta$ in the finite family $\mathcal{B}=\{0,\tfrac14,\tfrac12,\tfrac34,1\}$:
\begin{enumerate}
\item form the clip profile $C_\beta(t)\propto\big(w(t)\sqrt{\bar V(t)}/(1-t)\big)^{\beta}$, discretized to the buckets (note $\beta=0$ gives the constant profile: the \emph{uniform} plan);
\item calibrate a single noise scale by bisection so that the \emph{exact} PLD accountant, run on the fully declared workload (bucket counts, sampling rate, per-bucket $C_k,\sigma_k$), certifies $\epsdp(P_\beta)\le\epsdp_{\mathrm{budget}}$; discard $\beta$ if calibration fails in the search range;
\item score the calibrated candidate with a public utility proxy $U(P_\beta)$.
\end{enumerate}
Let $\mathcal{B}'\subseteq\mathcal{B}$ be the calibrated subset and return $P^\star=\arg\max_{\beta\in\mathcal{B}'}U(P_\beta)$.

\textbf{Guarantee (i): budget.}
$P^\star\in\mathcal{B}'$, and membership in $\mathcal{B}'$ \emph{is} the accountant's certificate $\epsdp(P^\star)\le\epsdp_{\mathrm{budget}}$; nothing further is needed.

\textbf{Guarantee (ii): never worse than uniform.}
The uniform plan $P_0$ always calibrates (bisection over a single scale for a fixed workload always terminates in the standard range), so $P_0\in\mathcal{B}'$; since $P^\star$ maximizes $U$ over a set containing $P_0$, we get $U(P^\star)\ge U(P_0)$.
This holds for \emph{every} choice of proxy $U$: the guarantee does not depend on the prior being right, only on uniform being in the family.

\textbf{Privacy cost of planning: zero.}
$w$, $\bar V$ and $U$ are computed on public data; the accountant is a deterministic function of declared parameters.
The private data is touched only by the single plan that is eventually executed, whose privacy is exactly the certificate from step 2.
\qed

\section{Text-conditional moment release}\label{app:textmom}

The finite partition is not essential.
For general captions, release the whitened \emph{cross moments} $\big(\tfrac{1}{RR_{\mathrm{txt}}}M_{zc},\ \tfrac{1}{R_{\mathrm{txt}}^2}M_{cc}\big)$ with $M_{zc}=\tfrac1N\sum_i z_i e(c_i)^\top$ and $M_{cc}=\tfrac1N\sum_i e(c_i)e(c_i)^\top$, where $e(c)$ is a caption featurization clipped to $R_{\mathrm{txt}}$ (publicly calibrated).
One replacement moves each whitened block by at most $2/N$ (rank-one differences), so the joint replace sensitivity is $2\sqrt2/N$, the same $1/N$ decay as the covariance. The bound is moreover unchanged when only the image or only the caption of a record is replaced (Proposition~\ref{prop:crossmom}).
Ridge post-processing yields the linear conditional mean $\mu(c)=\mu_0+M_{zc}(M_{cc}+\lambda I)^{-1}e(c)$, and Proposition~\ref{prop:analytic} holds verbatim with $\mu_y$ replaced by $\mu(c)$: conditioning only moves the mean.
With one-hot embeddings this \emph{exactly} recovers the per-class means (Proposition~\ref{prop:crossmom}), so the class-conditional release is the special case.
The mechanism is fixed; what governs utility is the \emph{noise geometry} of the featurization, and three choices matter.
(i) \emph{Centering and intercept}: caption embeddings share a large common component that carries no conditional information but dominates the whitened budget, so we center $e(c)$ at its public mean and absorb the shared part into the released unconditional mean $\mu_0$ (the global release of Stage~1, already paid for).
(ii) \emph{Image-side projection}: as in the class-conditional release, the cross moment is formed against features $P(z-\mu_0)$ clipped to $R_{\mathrm{feat}}$ in the public PCA basis, and $W$ is lifted back through $P^\top$.
(iii) \emph{Featurization}: for templated captions we parse against the public template and use per-slot indicator vectors (one one-hot block per attribute); for free-form captions, the pooled embedding of a frozen public text encoder.
The three featurizations span discrete to free-form conditioning: one-hot vectors (classes), per-slot indicators (attribute compositions), and pooled encoder embeddings (unrestricted captions). Table~\ref{tab:textmom} compares them.

\begin{table*}[t]
\centering
\footnotesize
\caption{Text-conditional moment release at $\epsdp{=}1$: MNIST downstream accuracy and per-attribute conditioning accuracy on composed-MNIST (captions specify class, quadrant marker, and stroke thickness), across caption featurizations; 3 seeds throughout. The class-conditional release is the exact one-hot special case (Proposition~\ref{prop:crossmom}). The last two rows ablate the Stage-1 release geometry (single seed, MNIST only).}
\label{tab:textmom}
\begin{tabular}{lcccc}
\toprule
 & MNIST & \multicolumn{3}{c}{composed-MNIST} \\
\cmidrule(lr){2-2}\cmidrule(lr){3-5}
Caption featurization & Acc & Class & Quadrant & Stroke \\
\midrule
One-hot class (reference) & $0.724\pm0.033$ & $0.106$ & $1.000$ & $1.000$ \\
Per-slot indicators & $0.632\pm0.067$ & $0.114$ & $1.000$ & $0.843$ \\
Pooled text encoder & $0.464\pm0.049$ & $0.111$ & $0.886$ & $0.748$ \\
\quad without image-side projection & $0.32$ & --- & --- & --- \\
\quad without centering or intercept & $0.12$ & --- & --- & --- \\
\bottomrule
\end{tabular}
\end{table*}

\subsection{Experiments across featurizations}\label{sec:textmomexp}
How much of the Stage-1 gain survives when the finite class partition is replaced by the cross-moment release, as the conditioning signal moves from discrete labels toward free-form captions (Table~\ref{tab:textmom})?
Per-slot indicator features reach $0.632\pm0.067$ against the class-conditional reference $0.724\pm0.033$, whose release they generalize (the reference is the exact one-hot special case); the residual gap is attributable to inverting the noised $M_{cc}$.
Pooled frozen-encoder features reach $0.464\pm0.049$, and both geometric corrections of the Stage-1 release are essential: removing the image-side projection drops accuracy to $0.32$, and removing the centering and the released intercept to $0.12$, below the pipeline without any moment release.
On composed-MNIST the text-conditional release preserves multi-attribute controllability: indicator features match the class-conditional control on quadrant accuracy and approach it on stroke thickness, while class accuracy is at chance for all variants \emph{including the control}, so the shortfall on that attribute is a property of the task at this scale and budget, not of the release.
Both settings give the same ordering: the more discrete the caption featurization, the closer the release comes to its class-conditional ceiling, with the mechanism, sensitivity bound, and accounting unchanged throughout.
The residual gap is also budget-dependent in the direction the noise explanation predicts: for indicator features it narrows from $0.345\pm0.025$ vs.\ $0.443\pm0.047$ at $\epsdp{=}0.3$ through $0.632$ vs.\ $0.724$ at $\epsdp{=}1$ to $0.829\pm0.032$ vs.\ $0.848\pm0.004$ at $\epsdp{=}3$. As the budget grows, the noised $M_{cc}$ inversion sharpens and the text-conditional release converges to its class-conditional special case.

\section{Generation quality and empirical audits}\label{app:audits}

\subsection{Generation quality: marginals vs.\ conditionals}\label{sec:quality}
FID measures the \emph{marginal} sample distribution and is nearly blind to class mismatch: the baseline's FID ($62.7$) is close to ours ($66.7$) while its conditional utility is $3.5\times$ worse.
We therefore report FID/P/R alongside downstream accuracy rather than in place of it.
Figure~\ref{fig:grids} shows the samples behind these numbers: the baseline retains public-domain glyph structure at all budgets, the pixel-space models sharpen with budget, and the latent variant produces the most digit-like samples at every budget at the price of decoder smoothing (the precision artifact analyzed below).

\begin{figure}[t]
\centering
\includegraphics[width=0.92\columnwidth]{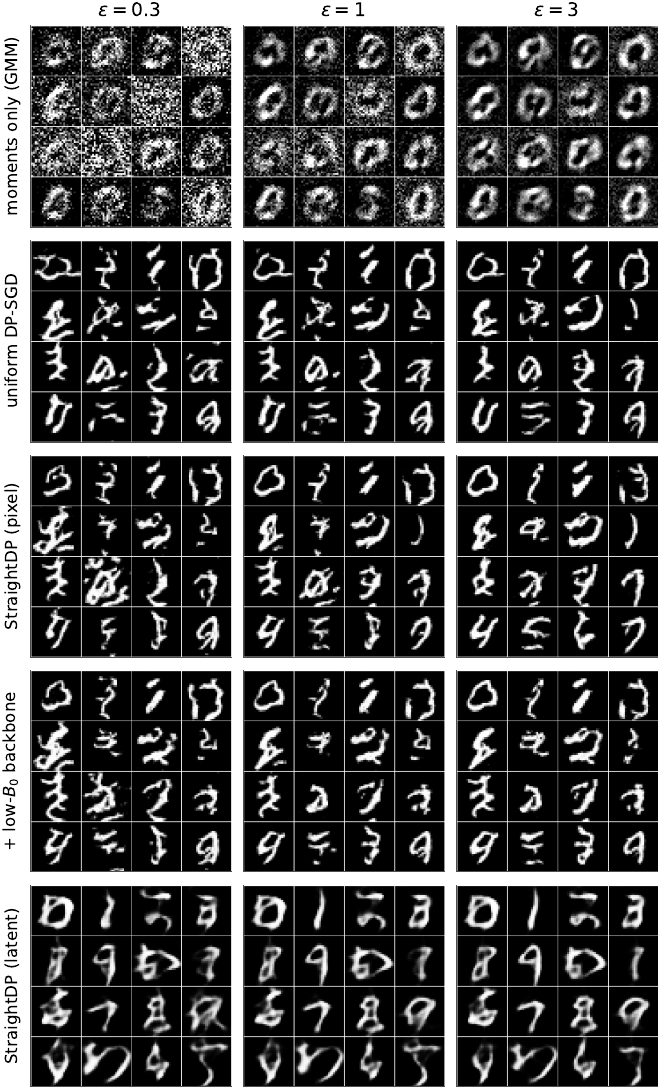}
\caption{Samples across budgets (rows: moments only, uniform DP-SGD, StraightDP pixel, $+$~low-$\Bo$ backbone, StraightDP latent; columns $\epsdp\in\{0.3,1,3\}$). Direct samples of the release are prototype-like; at strong privacy the DP-SGD baseline still emits public-domain glyphs; the release routes generation toward digits, and the latent variant is legible already at $\epsdp{=}0.3$.}
\label{fig:grids}
\end{figure}

Three label-aware metrics quantify the divergence directly at $\epsdp{=}1$ (Table~\ref{tab:condmetrics}, $10^4$ samples per run, 3 seeds): \emph{per-class FID}, the mean over classes of FID between generated and real test samples of the same class; \emph{label consistency}, the agreement of a classifier trained on real MNIST with the conditioning label of each generated sample; and \emph{domain rate}, the fraction of samples a real MNIST-vs-KMNIST classifier assigns to the private domain.
Conditioning FID on the class reverses the pixel-space ordering at this budget: uniform DP-SGD wins on label-blind FID ($62.7$ vs.\ $66.7$) but loses on per-class FID ($107.1$ vs.\ $101.4$), and its label consistency sits near the chance level of $0.1$, so its samples carry almost no information about the label they were conditioned on.
The released-moments sampler shows the mirror image, label consistency $0.62$ and the highest domain rate in the table at a per-class FID of $264$: the prototype signature of the main text in label-aware form.

\begin{table}[t]
\centering
\footnotesize
\setlength{\tabcolsep}{2pt}
\caption{Label-aware metrics at $\epsdp{=}1$ (MNIST, 3 seeds, mean$\pm$std; bold: best per column). pcFID = per-class FID; cons.\ = label consistency (chance $0.1$); domain = fraction assigned to the private domain by a real MNIST-vs-KMNIST classifier.}
\label{tab:condmetrics}
\begin{tabular}{lccc}
\toprule
& pcFID$\downarrow$ & Label cons. & Domain \\
\midrule
Moments only (GMM) & $264.4{\pm}2.9$ & $\mathbf{0.621{\pm}0.005}$ & $\mathbf{0.944{\pm}0.059}$ \\
Uniform DP-SGD & $107.1{\pm}0.2$ & $0.170{\pm}0.006$ & $0.649{\pm}0.072$ \\
StraightDP (pixel) & $\mathbf{101.4{\pm}2.4}$ & $0.444{\pm}0.049$ & $0.689{\pm}0.049$ \\
+ Low-$\Bo$ backbone & $112.6{\pm}0.3$ & $0.348{\pm}0.017$ & $0.795{\pm}0.069$ \\
\bottomrule
\end{tabular}
\end{table}

Table~\ref{tab:fidsweep} gives FID across the budget for all three training configurations (3 seeds).
Two patterns reinforce the caveat.
First, FID improves monotonically with the budget and the three methods converge by $\epsdp{=}3$: at weak privacy, quality is no longer a discriminator, even though downstream accuracy still separates the methods by $0.1$--$0.2$.
Second, at $\epsdp{=}0.3$ the pixel model has the \emph{worst} FID ($94.1$) yet a \emph{higher} downstream accuracy than the baseline ($0.443$ vs.\ $0.055$): the two metrics move in opposite directions exactly where the distinction matters most.
The point is sharpest against the prior methods, all run under our protocol (same $(\epsdp,\deltadp)$, replace-adjacency PLD accounting, public-only calibration, identical evaluation).
The one-shot embedding family attains strong downstream accuracy precisely because a labeled mean embedding is what a classifier probe needs, DP-MERF~\citep{harder2021_dpmerf} at $0.823$--$0.856$ across $\epsdp\in[0.3,10]$ and essentially flat in the budget, while its FID never approaches a sample-quality regime; neither the Hermite-feature successor DP-HP~\citep{vinaroz2022_dphp} nor the NTK-feature successor DP-NTK~\citep{yang2023_dpntk} improves on it once protocol, calibration, and generator are held fixed (their reported orderings come from per-dataset tuning).
Private Evolution~\citep{lin2024_pe}, run from the same frozen KMNIST prior as every pipeline row, fails in the opposite direction: its selection objective \emph{is} the Inception feature space, so with enough votes FID drops monotonically to $35.4\pm0.4$ at $\epsdp{=}10$, below every training route, while downstream accuracy never leaves the vicinity of chance ($0.044$--$0.111$); the evolved populations stay on the prior's Kuzushiji manifold, whose conditioning keeps regenerating glyph $c$ for caption $c$, so nearest-neighbor votes cannot rebind the class semantics. An aggressive variation schedule (renoising to $t_v\in[0.1,0.5]$ instead of $[0.45,0.85]$) lifts accuracy only to $0.134$ at $\epsdp{=}1$: the failure is structural, not an artifact of variation strength.
Each training-free mechanism saturates exactly the metric it optimizes and no other; we report both throughout.

\begin{table*}[t]
\centering
\small
\caption{FID vs.\ budget (MNIST, 3 seeds, mean$\pm$std; bold: best among the training configurations). The prior methods below the rule run under our protocol. The embedding family's probe accuracy is nearly flat in $\epsdp$ ($0.82$--$0.86$ for DP-MERF, $0.74$--$0.81$ for its successors) while its FID never approaches a usable regime; Private Evolution inverts the failure, optimizing Inception statistics directly, so its FID reaches $35.4$ while its accuracy stays at or below chance.}
\label{tab:fidsweep}
\begin{tabular}{lcccc}
\toprule
FID$\downarrow$ & $\epsdp{=}0.3$ & $\epsdp{=}1$ & $\epsdp{=}3$ & $\epsdp{=}10$ \\
\midrule
Uniform DP-SGD & $72.5\pm0.3$ & $62.7\pm0.1$ & $51.6\pm0.1$ & $41.4\pm0.2$ \\
StraightDP (pixel) & $94.1\pm2.7$ & $66.7\pm0.6$ & $51.5\pm0.3$ & $\mathbf{41.3\pm0.6}$ \\
StraightDP (latent) & $\mathbf{64.1\pm1.3}$ & $\mathbf{56.5\pm0.8}$ & $\mathbf{50.2\pm0.1}$ & $44.9\pm0.4$ \\
\midrule
DP-MERF & $280.6\pm0.8$ & $182.8\pm5.0$ & $109.1\pm2.4$ & $97.4\pm1.9$ \\
DP-HP & $284.3\pm5.0$ & $219.8\pm4.5$ & $167.3\pm5.6$ & $158.5\pm3.7$ \\
DP-NTK & $333.8\pm3.0$ & $292.7\pm3.4$ & $249.1\pm5.7$ & $229.8\pm11.1$ \\
Private Evolution & $122.4\pm2.1$ & $81.4\pm0.8$ & $52.8\pm2.0$ & $35.4\pm0.4$ \\
\bottomrule
\end{tabular}
\end{table*}

\subsection{Precision/recall: three structural signatures}\label{sec:pr}

Table~\ref{tab:prsweep} completes the marginal picture with improved precision/recall ($k{=}3$)~\citep{kynkaanniemi2019_pr}, computed in the same Inception feature space as FID: precision is the fraction of generated samples inside the real data's $k$-NN manifold (per-sample fidelity), recall the fraction of real samples inside the generated manifold (mode coverage).
Neither sees labels, and three patterns that look anomalous at first sight are in fact structural, each corroborated by a metric that moves the opposite way.

\emph{(i) The baseline wins the label-blind metrics while losing the labeled one.}
Uniform DP-SGD holds the best or tied-best P/R at every budget, with the worst downstream accuracy.
Its domain transfer fails: at strong privacy the weights barely leave the public pretraining, so it emits clean, diverse Kuzushiji-style glyphs (Figure~\ref{fig:grids}, main text) whose low-level stroke statistics sit close to the MNIST feature manifold.
A cleanly wrong marginal beats a noisily right one on any label-blind metric; the accuracy column inverts the ordering by up to $8\times$.
Our pixel model at $\epsdp{=}0.3$ incurs the mirror-image cost (P/R $0.15$/$0.38$, FID $94.1$): it genuinely moves toward the private domain and absorbs DP noise on the way.

\emph{(ii) The latent variant's precision collapses and is flat in the budget.}
Precision falls to $0.03$--$0.05$ while the same samples give the best accuracy at $\epsdp\le1$.
The frozen decoder smooths high-frequency structure and displaces \emph{every} sample by a similar offset in feature space, to just outside the tight $k$-NN balls of the nearly binary real digits, so the ``inside the ball'' count collapses wholesale.
The flatness in $\epsdp$ identifies the mechanism: the bottleneck is a decoder property, not a privacy-noise property.
The downstream classifier reads stroke topology rather than texture, so conditional information passes through the smoothing intact; recall survives at ${\approx}0.33$ because the generated manifold, with its own radii, still reaches a third of the real points.

\emph{(iii) The stream clamp contracts diversity.}
The low-$\Bo$ backbone has the lowest recall at strong privacy ($0.314$ at $\epsdp{=}0.3$): bounding activations trades expressive diversity for per-sample gradient signal-to-noise and concentrates generations on prototypical forms, the same trade that yields $+27\%$ accuracy (Table~\ref{tab:backbone}).

Absolute levels are depressed throughout (the best precision in the table is $0.26$): $28{\times}28$ grayscale digits upsampled to Inception's input resolution are a coarse instrument, a known artifact of feature-space P/R at MNIST scale, so the table should be read comparatively rather than absolutely.
Outside these three signatures, P and R rise monotonically with the budget, as expected.

\begin{table*}[t]
\centering
\small
\setlength{\tabcolsep}{2.8pt}
\caption{Improved precision/recall ($k{=}3$) vs.\ budget (MNIST, 3 seeds, mean$\pm$std); methods as in Table~\ref{tab:main}. The three structural signatures analyzed in the text: the label-blind metrics prefer the baseline's clean wrong-domain glyphs; the latent variant's precision collapse is decoder smoothing (flat in $\epsdp$); the low-$\Bo$ clamp contracts recall.}
\label{tab:prsweep}
\begin{tabular}{lcccccc}
\toprule
& \multicolumn{2}{c}{$\epsdp{=}0.3$} & \multicolumn{2}{c}{$\epsdp{=}1$} & \multicolumn{2}{c}{$\epsdp{=}3$} \\
\cmidrule(lr){2-3}\cmidrule(lr){4-5}\cmidrule(lr){6-7}
Method & P & R & P & R & P & R \\
\midrule
Uniform DP-SGD (baseline) & $0.164\pm0.002$ & $0.443\pm0.016$ & $0.219\pm0.002$ & $0.510\pm0.015$ & $0.246\pm0.001$ & $0.543\pm0.004$ \\
THA-planned DP-SGD & $0.167\pm0.013$ & $0.443\pm0.022$ & $0.219\pm0.002$ & $0.506\pm0.012$ & $0.248\pm0.008$ & $0.548\pm0.011$ \\
THA + moments (pixel) & $0.149\pm0.017$ & $0.381\pm0.038$ & $0.207\pm0.013$ & $0.471\pm0.012$ & $0.258\pm0.003$ & $0.519\pm0.013$ \\
\quad + Low-$\Bo$ backbone (pixel) & $0.138\pm0.014$ & $0.314\pm0.027$ & $0.228\pm0.013$ & $0.405\pm0.020$ & $0.233\pm0.005$ & $0.459\pm0.013$ \\
THA + moments (latent) & $0.032\pm0.001$ & $0.331\pm0.013$ & $0.039\pm0.003$ & $0.337\pm0.007$ & $0.046\pm0.002$ & $0.320\pm0.009$ \\
\bottomrule
\end{tabular}
\end{table*}

\subsection{Gradient concentration under the stream clamp}\label{app:gradstats}
Every run logs per-step quantiles of the per-sample gradient norms and the clip fraction; these logs quantify the mechanism claimed for the low-$\Bo$ gain.
At $\epsdp{=}0.3$ (pixel, 3 seeds, medians over steps): the unconstrained backbone has per-sample norm quantiles $p_{50}{=}1337$, $p_{90}{=}2156$ (upper-tail ratio $p_{90}/p_{50}=1.61$) at clip fraction $0.14$; the clamped backbone has $p_{50}{=}1450$, $p_{90}{=}2116$ (ratio $1.46$) at clip fraction $0.13$.
The clamp narrows the upper tail rather than shrinking the median: fewer samples are far above the clip, which is exactly the regime where clipping bias interacts worst with large $\sigma$.
These are training-trajectory statistics, not worst-case bounds; the claim in the main text is empirical.

\subsection{Attack audits}\label{sec:audits}
Accounted budgets are worst-case statements; we complement them with empirical attacks against the recommended configurations (Table~\ref{tab:audits}).
A membership-inference attack thresholds the flow-matching residual $\|v_\theta(tz+(1-t)\xi,t)-(z-\xi)\|$ of train versus held-out records, the statistic our training most directly optimizes; its AUC is indistinguishable from chance.
Canary extraction plants out-of-distribution secrets in the training set and ranks them by model likelihood after training; none of the $50$ canaries is preferentially generated.
The one-run audit inserts $200$ canaries in a single training run and derives a lower bound on $\epsdp$ from guessing accuracy; at accuracy $0.49$ the bound is vacuous, again consistent with the ledger.
Finally, every reported run's accounting transcript is replayed offline, and the recomputed totals never exceed the declared budgets.
Across all attacks the measured leakage lies far below the accounted $\epsdp$, as expected when the analyzed worst case is not attained by natural data.
\begin{table*}[t]
\centering
\footnotesize
\setlength{\tabcolsep}{4.5pt}
\caption{Attack audits on recommended configurations; all empirical leakage is consistent with (far below) the accounted budgets. Canary extraction follows \citet{carlini2019_secretsharer}; the one-run audit follows \citet{steinke2023_onerun} (200 canaries, 100 guesses).}
\label{tab:audits}
\begin{tabular}{lll}
\toprule
Audit & Result & Criterion \\
\midrule
Velocity-residual membership inference & AUC $0.482$ & $\approx0.5$ \\
Canary extraction ($50$ canaries) & $0/50$ extracted & indistinguishable from null \\
One-run auditing & guess accuracy $0.49$, $\epsdp_{\mathrm{lb}}=0$ & $\epsdp_{\mathrm{lb}}\le\epsdp$ \\
Transcript replay & achieved $\le$ declared, all runs & exact \\
\bottomrule
\end{tabular}
\end{table*}

\paragraph{Reproducibility statement}
All experiments run from a single repository with one command per run; every mechanism execution writes an auditable accounting transcript (JSONL) from which reported $\epsdp$ is recomputed; all calibrations use public data only; seeds and hyperparameters are enumerated in Section~\ref{app:hyper}. 

\paragraph{Ethics statement}
This work strengthens privacy protections for generative training and provides accounted guarantees plus empirical audits; we do not foresee direct negative applications beyond generic generative-model concerns.

\section{Hyperparameters}\label{app:hyper}

\paragraph{Compute infrastructure.}
Every run uses a single GPU on a Slurm cluster: NVIDIA L40S (48\,GB) or A100 (40\,GB) for training and evaluation, with the conditional metrics of Table~\ref{tab:condmetrics} computed on 32-core CPU nodes.
Software: Python 3.11, PyTorch 2.12 (CUDA 12.6), diffusers 0.39 and peft 0.19 (SD3 only), and the \texttt{dp-accounting} 0.6 PLD accountant; per-sample gradients use \texttt{torch.func} directly.
A pixel-space MNIST pipeline run completes in under two GPU-hours, a latent run in minutes, and one SD3 guidance evaluation in about twenty GPU-minutes.

All values in Table~\ref{tab:hyper} (final page) are read from the archived per-run configurations; none was tuned on private data (clip references, bucket shapes, $\tau$, and enforcement targets are public-data calibrations).

\textbf{Guidance start.}
The one guidance-specific hyperparameter is the start time $t_0$; Figure~\ref{fig:guidancet0} sweeps it and locates the operating point.

\textbf{End-to-end control.}
A two-dimensional Gaussian-mixture instance of the full pipeline runs in minutes and served as an end-to-end check during development; Figure~\ref{fig:toysweep} sweeps the budget on it.

\begin{figure}[t]
\centering
\includegraphics[width=\columnwidth]{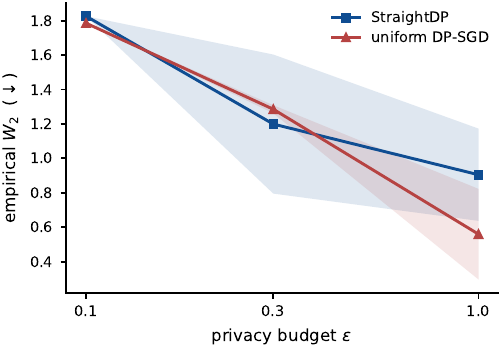}
\caption{Two-dimensional mixture control: Wasserstein-2 distance to the true distribution vs.\ budget (mean $\pm$ s.d.\ over up to 3 seeds; shaded bands). Both plans improve with budget and remain within seed noise of each other at this scale: the toy control verifies the pipeline end-to-end, but the problem is too small to separate the training plans; the separation emerges at MNIST scale (Table~\ref{tab:main}).}
\label{fig:toysweep}
\end{figure}

\begin{table*}[t]
\centering
\footnotesize
\caption{Hyperparameters. One column per pipeline component; shared values stated once. $\lambda_0$: covariance floor eigenvalue; $q$: Poisson sampling rate; $C_{\mathrm{ref}}$: reference clip before the public rescaling.}
\label{tab:hyper}
\setlength{\tabcolsep}{4pt}
\begin{tabular}{@{}ll@{}}
\toprule
\textbf{Backbone} & dual-stream DiT: width $192$, depth $6$, $6$ heads, patch $4$ ($28{\times}28$ input) \\
 & caption stream: frozen public encoder, width $64$, length $8$, vocabulary $256$ \\
 & constrained variants: $\Bo{=}16$, $\lo{=}1$ (decoupled attention), spectral cap $s{=}1$ \\
\midrule
\textbf{Public pretraining} & KMNIST (Fashion-MNIST in the transfer replication); $20{,}000$ steps, \\
 & batch $256$, AdamW lr $10^{-4}$, EMA $0.999$, $p_{\mathrm{uncond}}{=}0.1$ \\
 & latent variant: frozen public autoencoder; $12{,}000$ steps on latents \\
\midrule
\textbf{Stage 1 (moments)} & release weight $\rho_{\mathrm{mom}}{=}0.2$ (effective share $0.235\epsdp$ after \\
 & \quad renormalization over active stages); covariance rank $r{=}32$, $\lambda_0{=}10^{-2}$ \\
 & class release: public PCA offsets, feature clip $R_{\mathrm{feat}}$ (public quantile) \\
 & text release: image-side projection $p{=}64$, ridge $\lambda{=}10^{-3}$, \\
 & \quad $R_{\mathrm{txt}}$ at $1.1\times$ the public embedding-norm quantile \\
 & distillation: $3{,}000$ steps, batch $256$, lr $10^{-4}$, public replay; \\
 & \quad $\tau{=}0.35$, capped by the public cosine-knee calibration \\
 & \quad (the knee is public-domain stable: KMNIST and Fashion-MNIST \\
 & \quad publics both select $\tau{=}0.25$ in the latent variant) \\
\midrule
\textbf{Stage 2 (THA)} & DP-SGD weight $\rho_{\mathrm{sgd}}{=}0.65$ (effective share $0.765\epsdp$); $2{,}000$ steps, \\
 & \quad rate $q{=}0.02$ ($\E[\text{batch}]{=}1200$), \\
 & noise multiplicity $4$, lr $5{\times}10^{-5}$, EMA $0.999$, $p_{\mathrm{uncond}}{=}0.1$ \\
 & $K{=}4$ buckets, shapes $\beta\in\{0,\tfrac14,\tfrac12,\tfrac34,1\}$, $C_{\mathrm{ref}}{=}1$ publicly rescaled \\
 & accounting: exact PLD, $\deltadp{=}10^{-5}$; RDP bisection inside the planner \\
\midrule
\textbf{Prior methods} & DP-MERF: $10{,}000$ RFF, public median-heuristic bandwidth; generator \\
 & \quad $2{\times}512$ MLP, $6{,}000$ matching steps, Adam lr $10^{-3}$ (shared by all three) \\
 & DP-HP: sum kernel order $20$ $+$ product kernel over $1{,}500$ public-variance \\
 & \quad pixel pairs (order $3$, weight $\alpha{=}0.5$); DP-NTK: ReLU NTK, width $300$ \\
 & Private Evolution: $T{=}10$ vote rounds (PLD-composed), pop.\ $1{,}000$/class, \\
 & \quad $2$ variations, $t_v\in[0.45,0.85]$; SD3: $T{=}8$, $20$/class, $\sigma_v\in[0.35,0.8]$ \\
\midrule
\textbf{Evaluation} & $10{,}000$ samples, $25$ Euler steps, guidance $1.5$; LeNet probe for down- \\
 & stream accuracy (train on synthetic, test on real); FID and precision/ \\
 & recall ($k{=}3$) in Inception-V3 features; seeds $0$--$2$ throughout \\
\bottomrule
\end{tabular}
\end{table*}

\end{document}